\mathchardef\mhyphen="2D
\newcommand{\calO}{\mathcal{O}}
\DeclareMathOperator*{\argmax}{arg\,max}
\DeclareMathOperator*{\tr}{tr}
\newcommand{\field}[1]{\mathbb{#1}}
\newcommand{\R}{\field{R}}
\newcommand{\E}{\field{E}}
\newcommand\inn[2]{ \left\langle {#1} \,,\, {#2} \right\rangle }
\newcommand{\norm}[1]{\left\|{#1}\right\|}
\newtheorem{lemma}{Lemma}
\newtheorem{theorem}{Theorem}
\newtheorem{claim}{Claim}
\newtheorem{assumption}{Assumption}
\newcommand{\reals}{\mathbb{R}}
\DeclareMathOperator{\vect}{vec}
\DeclareMathOperator{\mat}{mat}
\newcommand{\one}{\mathds{1}}
\newcommand{\hide}[1]{}
\DeclareMathOperator{\logistic}{logistic}
\title{Efficient Online Bandit Multiclass Learning with $\tilde{O}(\sqrt{T})$ Regret}
\author[1]{Alina Beygelzimer\thanks{beygel@yahoo-inc.com}}
\author[2]{Francesco Orabona\thanks{francesco@orabona.com}}
\author[3]{Chicheng Zhang\thanks{chichengzhang@ucsd.edu}}
\affil[1]{Yahoo Research, New York, NY}
\affil[2]{Stony Brook University, Stony Brook, NY}
\affil[3]{University of California, San Diego, La Jolla, CA}
\begin{document}

\maketitle

\begin{abstract}
We present an efficient second-order algorithm with $\tilde{O}(\frac{1}{\eta}\sqrt{T})$\footnote{$\tilde{O}(\cdot)$ hides logarithmic factors.} regret for the bandit online multiclass problem. The regret bound holds simultaneously with respect to a family of loss functions parameterized by $\eta$, for a range of $\eta$ restricted by the norm of the competitor. The family of loss functions ranges from hinge loss ($\eta=0$) to squared hinge loss ($\eta=1$).  This provides a solution to the open problem of (\bibentry{AbernethyR09}).  We test our algorithm experimentally, showing that it also performs favorably against earlier algorithms.
\end{abstract}


\section{Introduction}
\label{sec:intro}
In the online multiclass classification problem, the learner must repeatedly classify examples into one of $k$ classes.  At each step $t$, the learner observes an example $\bm x_t\in \reals^d$ and predicts its label $\tilde{y}_t\in [k]$.  In the full-information case, the learner observes the true label $y_t\in [k]$ and incurs loss $\one[\tilde{y}_t\not= y_t]$.  In the bandit version of this problem, first considered in \cite{Kakade-Shalev-Shwartz-Tewari-2008}, the learner only observes its incurred loss $\one[\tilde{y}_t\not=y_t]$, i.e., whether or not its prediction was correct.  Bandit multiclass learning is a special case of the general contextual bandit learning~\cite{langford_zhang_08} where exactly one of the losses is 0 and all other losses are 1 in every round.

The goal of the learner is to minimize its regret with respect to the best predictor in some reference class of predictors, that is the difference between the total number of mistakes the learner makes and the total number of mistakes of the best predictor in the class.
\citet{Kakade-Shalev-Shwartz-Tewari-2008} proposed a bandit modification of the Multiclass Perceptron algorithm~\citep{DudaH73}, called the Banditron, that uses a reference class of linear predictors.
Note that even in the full-information setting, it is difficult to provide a true regret bound. Instead, performance bounds are typically expressed in terms of the total multiclass hinge loss of the best linear predictor, a tight upper bound on 0-1 loss.

The Banditron, while computationally efficient, achieves only $O(T^{2/3})$ expected regret with respect to this loss, where $T$ is the number of rounds.  This is suboptimal as the Exp4 algorithm of \citet{Auer:2003} can achieve $\tilde{O}(\sqrt{T})$ regret for the 0-1 loss, albeit very inefficiently.
\citet{AbernethyR09} posed an open problem: Is there an efficient bandit multiclass learning algorithm that achieves expected regret of
$\tilde{O}(\sqrt{T})$ with respect to any reasonable loss function?

The first attempt to solve this open problem was by \citet{CrammerG13}. Using a stochastic assumption about the mechanism generating the labels, they were able to show a $\tilde{O}(\sqrt{T})$ regret, with a second-order algorithm.

Later, \citet{HazanK11}, following a suggestion by \citet{AbernethyR09}, proposed the use of the log-loss coupled with a softmax prediction. The softmax depends on a parameter that controls the smoothing factor. The value of this parameter determines the exp-concavity of the loss, allowing \citet{HazanK11} to prove worst-case regret bounds that range between $O(\log T)$ and $O(T^\frac{2}{3})$, again with a second-order algorithm.
However, the choice of the smoothing factor in the loss becomes critical in obtaining strong bounds; see Appendix~\ref{sec:newtron} for detailed discussions.

The original Banditron algorithm has been also extended in many ways. \citet{WangJV10} have proposed a variant based on the exponentiated gradient algorithm~\citep{Kivinen-Warmuth-1997}. \citet{ValizadeganJW11} proposed different strategies to adapt the exploration rate to the data in the Banditron algorithm.
However, these algorithms suffer from the same theoretical shortcomings as the Banditron.

There has been significant recent focus on developing efficient algorithms for the general contextual bandit problem~\cite{DudikHKKLRZ11,Monster:2014,RakhlinS16,Syrgkanis2016a,Syrgkanis2016b}. While solving a more general problem that does not make assumptions on the structure of the reward vector or the policy class, these results assume that contexts or context/reward pairs are generated i.i.d., or the contexts to arrive are known beforehand, which we do not assume here.

In this paper, we follow a different route. Instead of designing an ad-hoc loss function that allows us to prove strong guarantees, we propose an algorithm that simultaneously satisfies a regret bound with respect to all the loss functions in a {family} of functions that are tight upper bounds to the 0-1 loss. The algorithm, named Second Order Banditron Algorithm (SOBA), is efficient and based on the second-order Perceptron algorithm~\cite{Cesa-BianchiCG05}. The regret bound is of the order of $\tilde O(\sqrt{T})$, providing a solution to the open problem of \citet{AbernethyR09}.

\section{Definitions and Settings}
\label{sec:setting}
We first introduce our notation.
Denote the rows of a matrix $\bm V \in \R^{k \times d}$ by $\bm v_1, \bm v_2, \ldots, \bm v_k$.  The vectorization of $\bm V$ is defined as $\vect(\bm V) = [\bm v_1, \bm v_2, \ldots, \bm v_k]^T$, which is a vector in $\R^{kd}$.
We define the reverse operation of reshaping a $kd\times 1$ vector into a $k\times d$ matrix by $\mat(\bm V)$, using a row-major order.
To simplify notation, we will use $\bm V$ and $\vect(\bm V)$ interchangeably throughout the paper.
For matrices $\bm A$ and $\bm B$, denote by $\bm A \otimes \bm B$ their Kronecker product.
For matrices $\bm X$ and $\bm Y$ of the same dimension, denote by $\langle \bm X, \bm Y \rangle=\sum_{i,j} \bm X_{i,j} \bm Y_{i,j}$ their inner product. We use $\| \cdot \|$ to denote the $\ell_2$ norm of a vector, and $\|\cdot\|_F$ to denote the Frobenius norm of a matrix. For a positive definite matrix $\bm A$, we use $\| \bm x \|_{\bm A} = \sqrt{\inn{x}{\bm A x}}$ to denote the Mahalanobis norm of $\bm x$ with respect to $A$. We use $\one_k$ to denote the vector in $\R^k$ whose entries are all $1$s.

We use $\E_{t-1}[\cdot]$ to denote the conditional expectation given the observations
up to time $t-1$ and $\bm x_t, y_t$, that is, $\bm x_1$, $y_1$, $\tilde{y}_1$, \ldots,
 $\bm x_{t-1}$, $y_{t-1}$, $\tilde{y}_{t-1}$, $\bm x_t$, $y_t$.

Let $[k]$ denote $\{1,\ldots,k\}$, the set of possible labels.
In our setting, learning proceeds in rounds:

\noindent For $t = 1,2,\ldots,T:$
\begin{enumerate}
  \itemsep-0.0001cm
  \item The adversary presents an example $\bm x_t \in \R^d$ to the learner,
  and commits to a hidden label $y_t \in [k]$.
  \item The learner predicts a label $\tilde{y}_t \sim \bm p_t$,
  where $\bm p_t \in \Delta^{k-1}$ is a probability distribution over $[k]$.
  \item The learner receives the bandit feedback $\one[\tilde{y}_t \neq y_t]$.
\end{enumerate}

The goal of the learner is to minimize the total number of mistakes,
$M_T = \sum_{t=1}^T \one[\tilde{y}_t \neq y_t]$.

We will use linear predictors specified by a matrix $\bm W \in \R^{k\times d}$. The prediction is given by $\bm W(\bm x)=\argmax_{i\in[k]} (\bm W \bm x)_i$, where $(\bm W \bm x)_i$ is the $i$th element of the vector $\bm W \bm x$, corresponding to class $i$.

A useful notion to measure the performance of a competitor $\bm U\in \R^{k\times d}$ is the {\em multiclass hinge loss}
\begin{equation}
\label{eq:multi_hinge}
\ell(\bm U, (\bm x, y)) := \max_{i \neq y} [1 - (\bm U \bm x)_y + (\bm U \bm x)_i]_+,
\end{equation}
where $[\cdot]_+=\max(\cdot,0)$.



\section{A History of Loss Functions}
\label{sec:loss}

As outlined in the introduction, a critical choice in obtaining strong theoretical guarantees is the choice of the loss function. In this section we introduce and motivate a family of multiclass loss functions.

In the full information setting, strong binary and multiclass mistake bounds are obtained through the use of the Perceptron algorithm~\citep{Rosenblatt58}. A common misunderstanding of the Perceptron algorithm is that it corresponds to a gradient descent procedure with respect to the (binary or multiclass) hinge loss. However, it is well known that the Perceptron simultaneously satisfies mistake bounds that depend on the cumulative hinge loss and also on the cumulative squared hinge loss, see for example~\citet{MohriR13}. Note also that the squared hinge loss is not dominated by the hinge loss, so, depending on the data, one loss can be better than the other.

We show that the Perceptron algorithm satisfies an even stronger mistake bound with respect to the cumulative loss of \emph{any} power of the multiclass hinge loss between 1 and 2.
\begin{theorem}
\label{thm:perc_bound}
On any sequence $(\bm x_1,y_1),\ldots,(\bm x_T,y_T)$ with
$\|\bm x_t\| \leq X$ for all $t\in [T]$,
and any linear predictor $\bm U \in \R^{k \times d}$,
the total number of mistakes $M_T$ of the multiclass Perceptron
satisfies, for any $q \in [1,2]$,
\[
M_T \leq M_T^{1-\frac{1}{q}} L_{\text{MH},q}^\frac{1}{q}(\bm U) + \|\bm U\|_F X \sqrt{2}\sqrt{M_T},
\]
where $L_{\text{MH},q}(\bm U) = \sum_{t=1}^T \ell(\bm W, (\bm x_t, y_t))^q$.
In particular, it simultaneously satisfies the following:
\[
M_T \leq L_{\text{MH},1}(\bm U)+2 X^2 \|\bm U\|^2_F + X \|\bm U\|_F \sqrt{2}\sqrt{L_{\text{MH},1}(\bm U)}
\]
\[
M_T \leq L_{\text{MH},2}(\bm U)+2 X^2 \|\bm U\|^2_F + X \|\bm U\|_F 2 \sqrt{2}\sqrt{L_{\text{MH},2}(\bm U)}~.
\]
\end{theorem}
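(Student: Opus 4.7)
The plan is to mimic the classical multiclass Perceptron mistake-bound argument, tracking the potential $\langle \bm U, \bm W_{t+1}\rangle$ versus $\|\bm W_{t+1}\|_F$, and then use H\"older's inequality to interpolate between the $q=1$ and $q=2$ bounds.

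\textbf{Setup.} Write the Perceptron update as $\bm W_{t+1}=\bm W_t+\bm Z_t$, where $\bm W_1=\bm 0$, and on mistake rounds $t\in\mathcal{M}$ the update matrix $\bm Z_t$ has $\bm x_t$ in row $y_t$, $-\bm x_t$ in row $\tilde y_t$, and zeros elsewhere; on non-mistake rounds $\bm Z_t=\bm 0$. Two basic facts follow: (i) $\|\bm Z_t\|_F^2 = 2\|\bm x_t\|^2\le 2X^2$ on mistakes, and (ii) because $\tilde y_t$ is the argmax of $\bm W_t\bm x_t$, $\langle \bm W_t,\bm Z_t\rangle = (\bm W_t\bm x_t)_{y_t} - (\bm W_t\bm x_t)_{\tilde y_t}\le 0$ on mistakes. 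Unrolling $\|\bm W_{t+1}\|_F^2 = \|\bm W_t\|_F^2 + 2\langle \bm W_t,\bm Z_t\rangle + \|\bm Z_t\|_F^2$ therefore gives $\|\bm W_{T+1}\|_F^2 \le 2X^2 M_T$.

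\textbf{Lower-bounding the potential via the hinge loss.} For any $\bm U$ and each mistake round $t$, the definition of $\ell$ in \eqref{eq:multi_hinge} yields
\[
\langle \bm U, \bm Z_t\rangle \;=\; (\bm U\bm x_t)_{y_t}-(\bm U\bm x_t)_{\tilde y_t}\;\ge\;1-\ell(\bm U,(\bm x_t,y_t)),
\]
so summing over rounds,
$\langle \bm U,\bm W_{T+1}\rangle \ge M_T - \sum_{t\in\mathcal{M}}\ell(\bm U,(\bm x_t,y_t))$. Combining with Cauchy--Schwarz $\langle \bm U,\bm W_{T+1}\rangle \le \|\bm U\|_F\|\bm W_{T+1}\|_F \le \|\bm U\|_F X\sqrt{2M_T}$ gives the master inequality
\[
M_T \;\le\; \sum_{t\in\mathcal{M}}\ell(\bm U,(\bm x_t,y_t))\;+\;\|\bm U\|_F X\sqrt{2}\sqrt{M_T}.
\]

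\textbf{Interpolation between $q=1$ and $q=2$.} For $q\in[1,2]$, H\"older's inequality with exponents $q$ and $q/(q-1)$ (the latter interpreted as $\infty$ if $q=1$) applied to the sum over the $M_T$ mistake rounds yields
\[
\sum_{t\in\mathcal{M}}\ell(\bm U,(\bm x_t,y_t)) \;\le\; M_T^{1-\frac{1}{q}}\Bigl(\sum_{t\in\mathcal{M}}\ell(\bm U,(\bm x_t,y_t))^q\Bigr)^{\!1/q}\;\le\; M_T^{1-\frac{1}{q}}\,L_{\text{MH},q}^{1/q}(\bm U),
\]
since extending the sum to all $t\in[T]$ only increases it. Plugging this into the master inequality gives the main bound of the theorem.

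\textbf{The two special cases.} For $q=1$ the bound becomes $M_T\le L_{\text{MH},1}(\bm U)+\|\bm U\|_F X\sqrt 2\sqrt{M_T}$, and for $q=2$ it becomes $M_T\le\sqrt{M_T L_{\text{MH},2}(\bm U)}+\|\bm U\|_F X\sqrt 2\sqrt{M_T}$. Both are quadratic inequalities in $\sqrt{M_T}$ of the form $x^2\le a+bx$, whose solution satisfies $x^2\le a+b^2+b\sqrt a$ (solve the quadratic and use $\sqrt{b^2+4a}\le b+2\sqrt a$). Applying this standard trick with $a=L_{\text{MH},1}(\bm U)$ (resp.\ $a=L_{\text{MH},2}(\bm U)$ after first isolating $\sqrt{M_T}$ by dividing through) and $b=\|\bm U\|_F X\sqrt 2$ (resp.\ $b=\|\bm U\|_F X\sqrt 2$, accumulating an extra factor of $2$ in the cross term) produces exactly the two stated inequalities; the only subtlety is the factor of $2$ in front of the $\sqrt{2}$ in the second display, which arises because at $q=2$ both the hinge-loss term and the Cauchy--Schwarz term contribute a $\sqrt{M_T}$. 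No step is genuinely hard; the only real choice is recognizing that H\"older's interpolation across the mistake rounds is what unifies the hinge-loss and squared-hinge-loss bounds.
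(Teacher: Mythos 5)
Your proof is correct and follows essentially the same route as the paper's: bound $\langle \bm U,\bm W_{T+1}\rangle$ above by Cauchy--Schwarz plus the unrolled norm recursion (dropping the nonpositive cross term), below via the hinge-loss inequality on mistake rounds, and then apply H\"older with exponents $(p,q)$ to interpolate; your restriction of the sum to mistake rounds is identical to the paper's use of the indicators $b_t$. The derivation of the two special cases by solving the quadratic in $\sqrt{M_T}$ (including the extra factor of $2$ in the $q=2$ cross term) is also right.
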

For the proof, see Appendix~\ref{proof:thm:2}.

A similar observation was done by \citet{Orabona-Cesa-Bianchi-Gentile-2012} who proved a \emph{logarithmic} mistake bound with respect to all loss functions in a similar family of functions smoothly interpolating between the hinge loss to the squared hinge loss. In particular, \citet{Orabona-Cesa-Bianchi-Gentile-2012} introduced the following family of binary loss functions
\begin{equation}
\label{eq:binary_family}
\ell_\eta(x) := \begin{cases}
                1-\frac{2}{2-\eta}x+\frac{\eta}{2-\eta} x^2, & x \leq 1 \\
                0, & x > 1 \, .
              \end{cases}
\end{equation}
where $0\leq \eta \leq 1$. Note that $\eta=0$ recovers the binary hinge loss, and $\eta=1$ recovers the squared hinge loss.
Meanwhile, for any $0 \leq \eta \leq 1$, $\ell_\eta(x) \leq \max\{ \ell_0(x), \ell_1(x) \}$, and $\ell_\eta$ is an upper bound on 0-1 loss:
$\one[x < 0] \leq \ell_\eta(x)$.
See Figure~\ref{fig:loss_eta} for a plot of the different functions in the family.


\begin{figure}[t]
\centering
\includegraphics[width=0.45\textwidth]{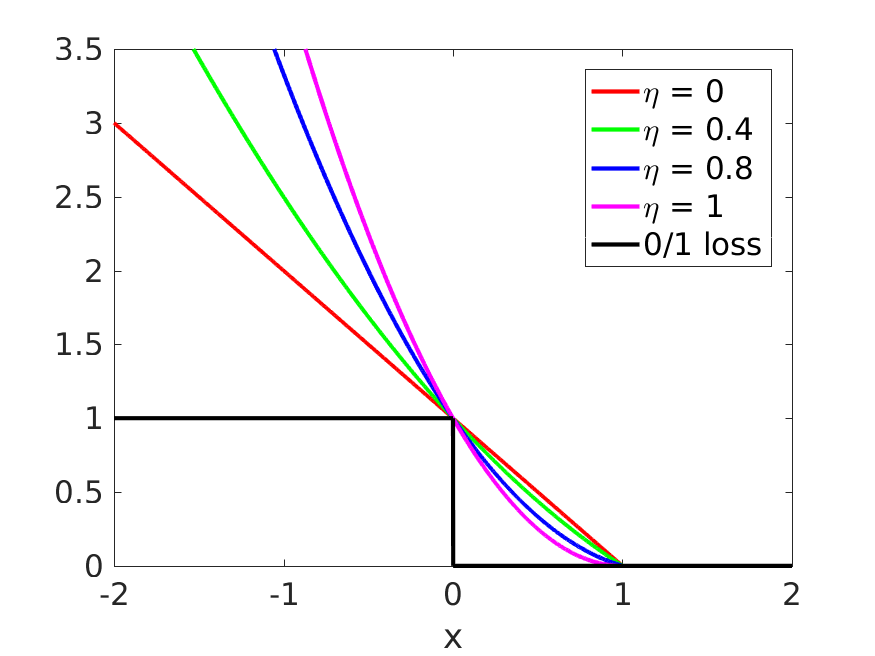}
\caption{Plot of the loss functions in $\ell_\eta$ for different values of $\eta$.}
\label{fig:loss_eta}
\end{figure}

Here, we define a multiclass version of the loss in \eqref{eq:binary_family} as
\begin{equation}
\label{eq:multi_family}
\ell_{\eta}(\bm U, (\bm x, y)) := \ell_\eta\left((\bm U \bm x)_y - \max_{i \neq y} (\bm U \bm x)_i \right).
\end{equation}
Hence, $\ell_{0}(\bm U, (\bm x, y))=\ell(\bm U, (\bm x, y))$ is the classical multiclass hinge loss and $\ell_{1}(\bm U, (\bm x, y))=\ell^2(\bm U, (\bm x, y))$ is the squared multiclass hinge loss.

Our algorithm has a $\tilde O(\frac{1}{\eta}\sqrt{T})$ regret bound that holds simultaneously for all loss functions in this family, with $\eta$ in a range that ensure that  $(\bm U \bm x)_i - (\bm U \bm x)_j \leq \frac{2-\eta}{\eta}$, $i ,j \in [k]$.
We also show that there exists a setting of the parameters of the algorithm that gives a mistake upper bound of $\tilde O((L^*T)^{1/3} + \sqrt{T})$, where $L^*$ is the cumulative hinge loss of the competitor, which is never worse that the best bounds in \citet{Kakade-Shalev-Shwartz-Tewari-2008}.

\section{Second Order Banditron Algorithm}
\label{sec:main_result}

\begin{algorithm}[t]
\caption{Second Order Banditron Algorithm (SOBA)}
\label{alg:soba}
\begin{algorithmic}[1]
\REQUIRE Regularization parameter $a > 0$, exploration parameter $\gamma \in [0,1]$.
\STATE Initialization: $\bm W_1 = \bm 0$, $\bm A_0 = a \bm I$, $\bm \theta_0 = \bm 0$
\FOR{$t = 1,2,\ldots,T$}
\STATE Receive instance $\bm x_t \in \R^d$
\STATE $\hat{y}_t = \arg\max_{i \in [k]} (\bm W_t \bm x_t)_i$
\STATE Define $\bm p_t = (1-\gamma) \bm e_{\hat{y}_t} + \frac \gamma k \one_k$
\STATE Randomly sample $\tilde{y}_t$ according to $\bm p_t$
\STATE Receive bandit feedback $\one[\tilde{y}_t \not= y_t]$
\STATE Initialize update indicator $n_t = 0$
\IF{$\tilde{y}_t = y_t$}
\STATE $\bar{y}_t = \arg\max_{i \in [k] \setminus \{y_t\}} (\bm W_t \bm x_t)_i$
\STATE $\bm g_t = \frac{1}{p_{t,y_t}} (\bm e_{\bar{y}_t} - \bm e_{y_t}) \otimes \bm x_t$
\STATE $\bm z_t = \sqrt{p_{t,y_t}} \bm g_t$
\STATE $m_t = \frac{\inn{\bm W_t}{\bm z_t}^2 + 2 \inn{\bm W_t}{\bm g_t}}{1+\bm z_t^T \bm A^{-1}_{t-1} \bm z_t}$
\IF{$m_t + \sum_{s = 1}^{t-1} n_s m_s \geq 0$}
       \label{alg:soba:updatecriterion}
       \STATE Turn on update indicator $n_t = 1$
\ENDIF
\ENDIF
\STATE Update $\bm A_t = \bm A_{t-1} + n_t \bm z_t \bm z_t^T$ \label{alg:soba:updateA}
\STATE Update $\bm \theta_t = \bm \theta_{t-1} - n_t \bm g_t$ \label{alg:soba:updatetheta}
\STATE Set $\bm W_{t+1} = \mat(\bm A_t^{-1} \bm \theta_t)$ \label{alg:soba:updateW}
\ENDFOR
\end{algorithmic}
\textbf{Remark:} matrix $\bm A_t$ is of dimension $kd \times kd$, and vector $\bm \theta_t$ is of dimension $kd$;
in line 20, the matrix multiplication results in a $kd$ dimensional vector,
which is reshaped to matrix $\bm W_{t+1}$ of dimension $k \times d$.
\end{algorithm}

This section introduces our algorithm for bandit multiclass online learning, called Second Order Banditron Algorithm (SOBA), described in Algorithm~\ref{alg:soba}. In Appendix~\ref{sec:expconcave}, we introduce a conceptually simpler version of SOBA (Algorithm~\ref{alg:soba-mod}); we defer the comparison of the two algorithms therein. 

SOBA makes a prediction using the $\gamma$-greedy strategy: At each iteration $t$,
with probability $1-\gamma$,
it predicts $\hat{y}_t = \argmax_{i \in [k]} (\bm W_t \bm x_t)_i$;
with the remaining probability $\gamma$, it selects a random action in $[k]$.
As discussed in~\citet{Kakade-Shalev-Shwartz-Tewari-2008},
randomization is essential for designing bandit multiclass learning
algorithms. If we deterministically
output a label and make a mistake, then it is hard to make an update
since we do not know the identity of $y_t$.
However, if randomization is used, we can
\emph{estimate} $y_t$ and perform online stochastic mirror descent type updates~\citep{Bubeck-Cesa-Bianchi-2012}.

SOBA keeps track of two model parameters: cumulative Perceptron-style updates
$\bm \theta_t = - \sum_{s=1}^t n_s \bm g_s \in \R^{kd}$
and corrected covariance matrix $\bm A_t = a\bm I + \sum_{s=1}^t n_s \bm z_s \bm z_s^T \in \R^{kd \times kd}$.
The classifier $\bm W_t$ is computed by matricizing over the matrix-vector product
 $\bm A_{t-1}^{-1} \bm \theta_{t-1} \in \R^{k d}$.
The weight vector $\bm \theta_t$ is standard in designing online mirror descent type
algorithms~\citep{Shalev-Shwartz-2011, Bubeck-Cesa-Bianchi-2012}.
The matrix $\bm A_t$
is standard in designing online learning algorithms with adaptive regularization~\citep{Cesa-BianchiCG05, CrammerK09, McMahan-Streeter-2010,DuchiHS11,Orabona-Crammer-Cesa-Bianchi-2015}.
The algorithm updates its model ($n_t = 1$) only when the following conditions hold simultaneously:
(1) the predicted label is correct ($\tilde{y}_t = y_t$), and
(2) the ``cumulative regularized negative margin'' ($\sum_{s=1}^{t-1} n_s m_s + m_t$) is positive if this update were performed.
Note that when the predicted label is correct we know the identity of the true label.

As we shall see, the set of iterations where $n_t = 1$ includes all iterations where
$\tilde{y}_t = y_t \neq \hat y_t$. This fact is crucial to the mistake bound analysis.
Furthermore, there are some iterations where $\tilde{y}_t = y_t = \hat y_t$ but
we still make an update.
This idea is related to ``online passive-aggressive algorithms''~\citep{CrammerD06, CrammerK09} in the full information setting, where the algorithm makes an update even when it predicts correctly but the margin is too small.

Let's now describe our algorithm more in details.
Throughout, suppose all the examples are $\ell_2$-bounded: $\|\bm x_t\|_2 \leq X$.


As outlined above, we associate a time-varying regularizer $R_t(\bm W) = \frac12\| \bm W \|_{\bm A_t}^2$,
where $\bm A_t = a \bm I + \sum_{s=1}^t n_s \bm z_s \bm z_s^T$ is a $kd \times kd$ matrix and
\[
\bm z_t = \sqrt{p_{t,y_t}} \bm g_t = \frac{1}{\sqrt{p_{t,y_t}}} (\bm e_{\bar{y}_t} - \bm e_{y_t}) \otimes \bm x_t~.
\]
Note that this time-varying regularizer is constructed by scaled versions of the updates $\bm g_t$. This is critical, because in expectation this becomes the correct regularizer.
Indeed, it is easy to verify that, for any $\bm U \in \R^{k \times d}$,
\begin{align*}
&\E_{t-1} [\one[y_t = \tilde{y}_t] \, \bm g_t] = (\bm e_{\bar{y}_t} - \bm e_{y_t}) \otimes \bm x_t, \\
&\E_{t-1} [\one[y_t = \tilde{y}_t] \, \inn{\bm U}{\bm z_t}^2] = \inn{\bm U}{(\bm e_{\bar{y}_t} - \bm e_{y_t}) \otimes \bm x_t}^2. 
\end{align*}
In words, this means that in expectation the regularizer contains the outer products of the updates, that in turn promote the correct class and demotes the wrong one.
We stress that it is impossible to get the same result with the estimator proposed in \citet{Kakade-Shalev-Shwartz-Tewari-2008}.
Also, the analysis is substantially different from the Online Newton Step approach~\citep{Hazan-Agarwal-Kale-2007} used in \citet{HazanK11}.

In reality, we do not make an update in all iterations in which $\tilde{y}_t = y_t$,
since the algorithm need to maintain the invariant that $\sum_{s=1}^t m_s n_s \geq 0$,
which is crucial to the proof of Lemma~\ref{lem:vaw}.
Instead, we prove a technical lemma that gives an explicit form on the expected
update $n_t \bm g_t$ and expected regularization $n_t \bm z_t \bm z_t^T$.
Define
\begin{align*}
q_t &:= \one\left[\sum_{s=1}^{t-1} n_s m_s + m_t \geq 0\right],\\
h_t &:= \one[\hat{y}_t \neq y_t] + q_t \one[\hat{y}_t = y_t]~.
\end{align*}
\begin{lemma}
  For any $\bm U \in \R^{k d}$,
  \[ \E_{t-1} \left[ n_t \inn{\bm U}{\bm g_t} \right] = h_t \inn{\bm U}{(\bm e_{y_t} - \bm e_{\bar{y}_t}) \otimes \bm x_t}, \]
  \[ \E_{t-1} \left[ n_t \inn{\bm U}{\bm z_t}^2 \right] = h_t \inn{\bm U}{(\bm e_{y_t} - \bm e_{\bar{y}_t}) \otimes \bm x_t}^2. \]
\label{lem:nt}
\end{lemma}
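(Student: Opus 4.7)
The plan is to unwind $n_t$ as the product of two factors: the randomness $\mathbf{1}[\tilde y_t = y_t]$, and a factor that depends only on information available before $\tilde y_t$ is drawn. First I would observe that $\hat y_t$, $\bar y_t$, $p_t$ (hence $p_{t,y_t}$), and the quantities $g_t, z_t, m_t$ (formally defined by the same formulas regardless of the outcome of $\tilde y_t$) are all measurable with respect to the information used to form $\E_{t-1}$. Consequently $q_t$ is $\E_{t-1}$-measurable, and $n_t = \mathbf{1}[\tilde y_t = y_t] \, q_t$, while $\E_{t-1}\big[\mathbf{1}[\tilde y_t = y_t]\big] = p_{t,y_t}$.

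The main step is to show $h_t = q_t$, i.e.\ $q_t = 1$ whenever $\hat y_t \neq y_t$. For this I would establish, by induction on $t$, the invariant $\sum_{s=1}^{t} n_s m_s \geq 0$. The induction step is immediate from the update rule on line~\ref{alg:soba:updatecriterion}: either $n_t = 0$ and the partial sum is unchanged, or $n_t = 1$ and the rule itself enforces $\sum_{s=1}^{t-1} n_s m_s + m_t \geq 0$. Then I would show $m_t \geq 0$ whenever $\hat y_t \neq y_t$: in that case $\bar y_t = \hat y_t$ (the second-best class equals the overall argmax when the overall argmax is not $y_t$), so
\[
\inn{\bm W_t}{\bm g_t} = \tfrac{1}{p_{t,y_t}}\big[(\bm W_t \bm x_t)_{\hat y_t} - (\bm W_t \bm x_t)_{y_t}\big] \geq 0,
\]
and since $\inn{\bm W_t}{\bm z_t}^2 \geq 0$ and $1 + \bm z_t^\top \bm A_{t-1}^{-1} \bm z_t > 0$, we get $m_t \geq 0$. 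Combined with the invariant, $\sum_{s=1}^{t-1} n_s m_s + m_t \geq 0$ and hence $q_t = 1$. Therefore $h_t = \mathbf{1}[\hat y_t \neq y_t] + q_t \mathbf{1}[\hat y_t = y_t] = q_t$ in both branches.

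With this in hand, I would just take expectations. Using measurability of $g_t$ and $q_t$:
\[
\E_{t-1}\big[n_t \inn{\bm U}{\bm g_t}\big] = q_t \inn{\bm U}{\bm g_t} \cdot p_{t,y_t} = h_t \cdot \tfrac{p_{t,y_t}}{p_{t,y_t}} \inn{\bm U}{(\bm e_{\bar y_t} - \bm e_{y_t}) \otimes \bm x_t},
\]
which matches the stated identity (up to the sign convention of the lemma, which I would follow). For the quadratic identity, since $\bm z_t = \sqrt{p_{t,y_t}} \bm g_t$, one has $\inn{\bm U}{\bm z_t}^2 = p_{t,y_t} \inn{\bm U}{\bm g_t}^2 = \tfrac{1}{p_{t,y_t}} \inn{\bm U}{(\bm e_{\bar y_t} - \bm e_{y_t}) \otimes \bm x_t}^2$, and
\[
\E_{t-1}\big[n_t \inn{\bm U}{\bm z_t}^2\big] = q_t p_{t,y_t} \cdot \tfrac{1}{p_{t,y_t}} \inn{\bm U}{(\bm e_{y_t} - \bm e_{\bar y_t}) \otimes \bm x_t}^2 = h_t \inn{\bm U}{(\bm e_{y_t} - \bm e_{\bar y_t}) \otimes \bm x_t}^2,
\]
where the square absorbs the sign.

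The main obstacle is the inductive argument that $q_t = h_t$: it requires simultaneously propagating the non-negativity of the cumulative sum $\sum_s n_s m_s$ and exploiting the crucial structural fact that when the greedy prediction is wrong ($\hat y_t \neq y_t$) the auxiliary label $\bar y_t$ coincides with $\hat y_t$, forcing $m_t \geq 0$. Once that coupling is nailed down, the conditional expectation computation is a short one-line check.
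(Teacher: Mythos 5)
Your proof is correct and follows essentially the same route as the paper's: both hinge on the invariant $\sum_{s\le t} n_s m_s \ge 0$ together with the observation that $\hat y_t \neq y_t$ forces $\bar y_t = \hat y_t$ and hence $m_t \ge 0$, and both then pull the $\E_{t-1}$-measurable factor out of the expectation over $\tilde y_t$ using $\E_{t-1}[\one[\tilde y_t = y_t]] = p_{t,y_t}$. Your reformulation $n_t = \one[\tilde y_t = y_t]\, q_t$ with $h_t = q_t$ is a mild streamlining of the paper's equivalent decomposition $n_t = G_t + H_t$, and your parenthetical about the sign correctly flags a sign discrepancy between the definition of $\bm g_t$ in line 11 of the algorithm and the way the lemma is stated.
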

The proof of Lemma~\ref{lem:nt} is deferred to the end of Subsection~\ref{sec:proof_main}.

Our last contribution is to show how our second order algorithm satisfies a mistake bound for an entire family of loss functions. Finally, we relate the performance of the algorithm that predicts $\hat{y}_t$ to the $\gamma$-greedy algorithm.

Putting all together, we have our expected mistake bound for SOBA.~\footnote{Throughout the paper, expectations are taken with respect to the randomization of the algorithm.}
\begin{theorem}
\label{thm:soba_bound}
SOBA has the following expected upper bound on the number of mistakes, $M_T$, for any $\bm U \in \R^{k \times d}$
and any $0 < \eta \leq \min(1, \frac{2}{2\max_i \|\bm u_i\| X + 1})$,
\begin{align*}
\E\left[ M_T \right]
&\leq L_\eta(\bm U)+\frac{a\eta}{2-\eta}\| \bm U \|_F^2 \\
&\quad +\frac{k}{\gamma \eta(2-\eta) } \sum_{t=1}^T \E\left[\bm z_t^T \bm A_t^{-1} \bm z_t\right]+\gamma T,
\end{align*}
where $L_\eta(\bm U) := \sum_{t=1}^T \ell_{\eta}(\bm U, (\bm x_t, y_t))$
is the cumulative $\eta$-loss of the linear predictor $\bm U$, and $\{\bm u_i\}_{i=1}^k$ are rows of $\bm U$.

In particular, setting $\gamma=O( \sqrt{\frac{k^2\,d \ln T}{T}})$ and $a = X^2$, we have
\[
\E\left[ M_T \right]
\leq L_\eta(\bm U)+O\left(X^2 \|\bm U\|^2_F+\frac{k}{\eta}\sqrt{d T \ln T}\right)~.
\]
\end{theorem}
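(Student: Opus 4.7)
The overall plan has four steps: reduce the bandit mistake count to an expectation involving $h_t$ plus an exploration cost; pointwise-upper-bound $h_t$ via a quadratic lower bound on $\ell_\eta$; swap the pointwise $h_t$-expressions for conditional expectations of $n_t$-expressions using Lemma~\ref{lem:nt}; and close with the underlying Vovk--Azoury--Warmuth-style regret bound (Lemma~\ref{lem:vaw}, alluded to in the text) combined with the invariant $\sum_{s\le t} n_s m_s \ge 0$ that the update criterion on line~\ref{alg:soba:updatecriterion} enforces.

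First, $\gamma$-greedy sampling gives $\E_{t-1}[\one[\tilde{y}_t\neq y_t]]=1-p_{t,y_t}\leq \one[\hat{y}_t\neq y_t]+\gamma$, and since $h_t\geq \one[\hat{y}_t\neq y_t]$ by construction, summing and taking expectations yields $\E[M_T]\leq \E[\sum_t h_t]+\gamma T$. Second, set $\bar{y}_t=\arg\max_{i\neq y_t}(\bm{W}_t\bm{x}_t)_i$ and $M_t=(\bm{U}\bm{x}_t)_{y_t}-(\bm{U}\bm{x}_t)_{\bar{y}_t}$. Monotonicity of $\ell_\eta$ gives $\ell_\eta(M_t)\leq \ell_\eta(\bm{U},(\bm{x}_t,y_t))$ since $M_t$ is at least the true competitor margin, and a direct case analysis on \eqref{eq:binary_family} shows $\ell_\eta(x)\geq 1-\tfrac{2}{2-\eta}x+\tfrac{\eta}{2-\eta}x^2$ for every $x\leq \tfrac{2-\eta}{\eta}$ (the two roots of the RHS are $1$ and $\tfrac{2-\eta}{\eta}$). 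The assumption $\eta\leq \tfrac{2}{2\max_i\|\bm{u}_i\|X+1}$ forces $|M_t|\leq 2X\max_i\|\bm{u}_i\|\leq \tfrac{2-\eta}{\eta}$, so the lower bound applies at $M_t$, giving
$$
h_t \;\leq\; \ell_\eta(\bm{U},(\bm{x}_t,y_t)) + \tfrac{2}{2-\eta}\,h_t M_t - \tfrac{\eta}{2-\eta}\,h_t M_t^2.
$$

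Third, Lemma~\ref{lem:nt} rewrites $h_t M_t=\E_{t-1}[n_t\inn{\bm{U}}{\bm{g}_t}]$ and $h_t M_t^2=\E_{t-1}[n_t\inn{\bm{U}}{\bm{z}_t}^2]$. Summing, taking total expectations, and combining with Step~1,
$$
\E[M_T] \leq L_\eta(\bm{U}) + \tfrac{2}{2-\eta}\,\E\!\bigl[\textstyle\sum_t n_t\inn{\bm{U}}{\bm{g}_t}\bigr] - \tfrac{\eta}{2-\eta}\,\E\!\bigl[\textstyle\sum_t n_t\inn{\bm{U}}{\bm{z}_t}^2\bigr] + \gamma T.
$$
I would then invoke Lemma~\ref{lem:vaw} for the second-order update $\bm{W}_{t+1}=\bm{A}_t^{-1}\bm{\theta}_t$, applied to a rescaled competitor $c\bm{U}$ with $c$ tuned so that the algorithm-side coefficients of $\inn{\bm{U}}{\bm{g}_t}$ and $\inn{\bm{U}}{\bm{z}_t}^2$ match those appearing above. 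The invariant $\sum_s n_s m_s \geq 0$ is precisely what allows one to \emph{drop} the algorithm's own weighted quadratic progress term, leaving $\tfrac{a\eta}{2-\eta}\|\bm{U}\|_F^2$ and a sum of $\bm{z}_t^T\bm{A}_t^{-1}\bm{z}_t$ terms; finally, using $\bm{z}_t=\sqrt{p_{t,y_t}}\bm{g}_t$ and $p_{t,y_t}\geq \gamma/k$ extracts the prefactor $\tfrac{k}{\gamma\eta(2-\eta)}$.

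The main technical obstacle is the bookkeeping in this last step: the loss-family coefficients $\tfrac{2}{2-\eta}$ and $\tfrac{\eta}{2-\eta}$ must line up, after rescaling of $\bm{U}$, with the linear and quadratic penalties produced by Lemma~\ref{lem:vaw}, and the $\gamma/k$ lower bound on $p_{t,y_t}$ is what converts $\sum_t \E[n_t\bm{z}_t^T\bm{A}_t^{-1}\bm{z}_t]$-type quantities into the stated $\tfrac{k}{\gamma\eta(2-\eta)}\sum_t \E[\bm{z}_t^T\bm{A}_t^{-1}\bm{z}_t]$ form. The simpler closed-form bound in the second display then follows from the standard log-determinant estimate $\sum_t \bm{z}_t^T\bm{A}_t^{-1}\bm{z}_t=O(kd\log T)$ together with the choice $\gamma=\Theta(\sqrt{k^2 d\ln T/T})$ and $a=X^2$, which balances the $\gamma T$ exploration cost against the $\tfrac{1}{\gamma}$-weighted second-order term.
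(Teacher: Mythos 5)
Your proposal is correct and follows essentially the same route as the paper: the triangle-inequality reduction $\E[M_T]\le\E[\sum_t h_t]+\gamma T$, the quadratic lower bound $f(z)=1-\tfrac{2}{2-\eta}z+\tfrac{\eta}{2-\eta}z^2\le\ell_\eta(z)$ on $(-\infty,\tfrac{2-\eta}{\eta}]$ combined with monotonicity, the conversion via Lemma~\ref{lem:nt}, and Lemma~\ref{lem:vaw} applied to the rescaled competitor $\eta\bm U$ (the paper's choice of your constant $c$), with $p_{t,y_t}\ge\gamma/k$ and the log-determinant lemma closing the argument. The only difference is cosmetic ordering: the paper starts from Lemma~\ref{lem:vaw} and then adds $\eta(2-\eta)\E[\sum_t h_t]$ to both sides, whereas you start from the pointwise bound on $h_t$ and invoke Lemma~\ref{lem:vaw} last.
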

Note that, differently from previous analyses~\citep{Kakade-Shalev-Shwartz-Tewari-2008,CrammerG13,HazanK11}, we do not need to assume a bound on the norm of the competitor, as in the full information Perceptron and Second Order Perceptron algorithms.
In Appendix~\ref{sec:adaptive}, we also present an adaptive variant of SOBA that sets exploration rate $\gamma_t$ dynamically, which achieves a regret bound within a constant factor of that using optimal tuning of $\gamma$.

We prove Theorem~\ref{thm:soba_bound} in the next Subsection, while in Subsection~\ref{sec:fallback} we prove a mistake bound with respect to the hinge loss, that is not fully covered by Theorem~\ref{thm:soba_bound}.

\subsection{Proof of Theorem~\ref{thm:soba_bound}}
\label{sec:proof_main}

Throughout the proofs,
$\bm U$, $\bm W_t$, $\bm g_t$, and $\bm z_t$'s should be thought of as $kd \times 1$ vectors.
We first show the following lemma. Note that this is a statement over any sequence and no expectation is taken.
%
\begin{lemma}
\label{lem:vaw}
For any $\bm U \in \R^{kd}$, with the notation of Algorithm~\ref{alg:soba}, we have:
\begin{align*}
\sum_{t=1}^T &n_t \left(2 \inn{\bm U}{-\bm g_t} - \inn{\bm U}{\bm z_t}^2 \right) \\
&\leq a  \| \bm U\|_F^2
+ \sum_{t=1}^T n_t \bm g_t^T \bm A_t^{-1} \bm g_t~.
\end{align*}
\end{lemma}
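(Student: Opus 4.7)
The plan is to reduce the inequality to a scalar dual statement about $\bm \theta_T^T \bm A_T^{-1}\bm \theta_T$, and then establish that inequality by a Vovk--Azoury--Warmuth style telescoping argument that crucially uses the update invariant built into SOBA. First, using $\bm A_T = a\bm I + \sum_t n_t \bm z_t \bm z_t^T$ and $\bm \theta_T = -\sum_t n_t \bm g_t$, I would rewrite $\sum_{t=1}^T n_t \inn{\bm U}{\bm z_t}^2 = \bm U^T (\bm A_T - a\bm I)\bm U$ and $2 \sum_{t=1}^T n_t \inn{\bm U}{-\bm g_t} = 2\inn{\bm U}{\bm \theta_T}$, so that the claimed bound is equivalent to $2\inn{\bm U}{\bm \theta_T} - \bm U^T \bm A_T \bm U \leq \sum_{t=1}^T n_t \bm g_t^T \bm A_t^{-1}\bm g_t$. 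Since the left-hand side is a concave quadratic in $\bm U$ maximized at $\bm U^\star = \bm A_T^{-1}\bm \theta_T$ with value $\bm \theta_T^T \bm A_T^{-1}\bm \theta_T$ (by completing the square), it suffices to prove the dual inequality $\bm \theta_T^T \bm A_T^{-1}\bm \theta_T \leq \sum_{t=1}^T n_t \bm g_t^T \bm A_t^{-1}\bm g_t$.

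The second step is a telescoping argument with the potential $\Phi_t := \bm \theta_t^T \bm A_t^{-1}\bm \theta_t$, so that $\Phi_0 = 0$. On rounds with $n_t = 0$ nothing changes. On a round with $n_t = 1$, I substitute $\bm \theta_t = \bm \theta_{t-1} - \bm g_t$ and apply Sherman--Morrison to $\bm A_t^{-1} = (\bm A_{t-1}+\bm z_t\bm z_t^T)^{-1}$. Writing $r_t = \bm z_t^T \bm A_{t-1}^{-1}\bm z_t$, using $\bm W_t = \mat(\bm A_{t-1}^{-1}\bm \theta_{t-1})$ together with the proportionality $\bm z_t = \sqrt{p_{t,y_t}}\,\bm g_t$, the cross term $\bm \theta_{t-1}^T \bm A_t^{-1}\bm g_t$ simplifies to $\inn{\bm W_t}{\bm g_t}/(1+r_t)$, after which
\[
\Phi_t - \Phi_{t-1} \;=\; \bm g_t^T \bm A_t^{-1}\bm g_t \;-\; \frac{\inn{\bm W_t}{\bm z_t}^2 + 2\inn{\bm W_t}{\bm g_t}}{1+r_t} \;=\; \bm g_t^T \bm A_t^{-1}\bm g_t - m_t,
\]
with $m_t$ exactly as defined in Algorithm~\ref{alg:soba}.

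Summing over $t$ yields $\Phi_T = \sum_t n_t\bigl(\bm g_t^T \bm A_t^{-1}\bm g_t - m_t\bigr)$. To close the proof, I would note that SOBA's update rule on line~\ref{alg:soba:updatecriterion} enforces by construction the invariant $\sum_{s=1}^t n_s m_s \geq 0$ for all $t$: whenever $n_t = 1$ is set, the algorithm has just verified that the new partial sum stays non-negative, and on rounds with $n_t = 0$ the partial sum is unchanged. Therefore $\sum_t n_t m_t \geq 0$, which gives $\Phi_T \leq \sum_t n_t \bm g_t^T \bm A_t^{-1}\bm g_t$, and combined with the reduction in the first paragraph this yields the lemma. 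I expect the main obstacle to be the Sherman--Morrison calculation: one has to track the scalar $\sqrt{p_{t,y_t}}$ carefully so that the mixed term simplifies to exactly $\inn{\bm W_t}{\bm g_t}/(1+r_t)$, producing precisely the algorithmic quantity $m_t$. The entire argument hinges on this being the ``right'' cancellation, which in turn motivates why SOBA's update criterion is phrased in terms of $m_t$ at all.
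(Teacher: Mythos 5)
Your proposal is correct, and it takes a genuinely different (dual) route from the paper's proof. The paper works in the primal: it invokes a per-step online least squares inequality (its Lemma~\ref{lem:perstep}) that telescopes the comparator-dependent quantity $\frac12\|\bm U-\bm W_t\|_{\bm A_{t-1}}^2$, and then identifies $m_t$ through an algebraic identity relating $(\inn{\bm W_t}{\bm z_t}+\alpha_t)^2 k_t-(\inn{\bm U}{\bm z_t}+\alpha_t)^2$ to $m_t - 2\inn{\bm U}{\bm g_t}-\inn{\bm U}{\bm z_t}^2-\bm g_t^T\bm A_t^{-1}\bm g_t$, where $\alpha_t = 1/\sqrt{p_{t,y_t}}$. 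You instead complete the square over $\bm U$ up front, reducing the lemma to the comparator-free dual statement $\bm\theta_T^T\bm A_T^{-1}\bm\theta_T\leq\sum_t n_t\,\bm g_t^T\bm A_t^{-1}\bm g_t$, and then telescope the potential $\Phi_t=\bm\theta_t^T\bm A_t^{-1}\bm\theta_t$; your Sherman--Morrison computation of $\Phi_t-\Phi_{t-1}=\bm g_t^T\bm A_t^{-1}\bm g_t-m_t$ is right (the proportionality $\bm g_t=\alpha_t\bm z_t$ makes the cross term collapse to $\inn{\bm W_t}{\bm g_t}/(1+r_t)$ exactly as you anticipate), and both proofs close with the same invariant $\sum_t n_t m_t\geq 0$, which indeed follows by induction from line~\ref{alg:soba:updatecriterion}. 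What your approach buys is a cleaner separation of concerns --- the comparator disappears after one completion of the square, and $m_t$ emerges transparently as the per-step drop of the potential --- at the cost of relying on being able to maximize over $\bm U$ in closed form (harmless here since $\bm A_T\succ 0$); the paper's primal telescoping is the more standard mirror-descent-style bookkeeping and reuses a lemma it needs anyway. One cosmetic caveat: since the lemma is stated for all $\bm U$, the maximization step is used only in one direction (any $\bm U$ is dominated by $\bm U^\star=\bm A_T^{-1}\bm\theta_T$), which is exactly what you need, so no gap there.
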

\begin{proof}
First, from line 14 of Algorithm~\ref{alg:soba}, it can
be seen (by induction) that
SOBA maintains the invariant that
\begin{equation}
  \sum_{s=1}^t n_s m_s \geq 0.
  \label{eqn:invariant}
\end{equation}

We next reduce the proof to the regret analysis of online least squares problem.
For iterations where $n_t = 1$, define $\alpha_t = \frac{1}{\sqrt{p_{t,y_t}}}$ so that
$\bm g_t = \alpha_t \bm z_t$. From the algorithm, $\bm A_t = aI + \sum_{s=1}^t n_s \bm z_s \bm z_s^T$, and
$\bm W_t$ is the ridge regression solution based on data collected in time $1$ to $t-1$,
i.e. $\bm W_t = \bm A_{t-1}^{-1} (-\sum_{s=1}^{t-1} n_s \bm g_s) = \bm A_{t-1}^{-1} (-\sum_{s=1}^{t-1} n_s \alpha_s \bm z_s)$.

By per-step analysis in online least squares,~\citep[see, e.g.,][]{Orabona-Cesa-Bianchi-Gentile-2012}(See Lemma~\ref{lem:perstep} for a proof), we have that if an update is made at iteration $t$, i.e. $n_t = 1$, then
\begin{align*}
  &\frac12(\inn{\bm W_t}{\bm z_t} + \alpha_t)^2 (1 - \bm z_t^T \bm A_t^{-1} \bm z_t)
  - \frac12(\inn{\bm U}{\bm z_t} + \alpha_t)^2 \\
  &\quad \leq
  \frac 1 2 \| \bm U - \bm W_t \|_{\bm A_{t-1}}^2 - \frac 1 2 \| \bm U - \bm W_{t+1} \|_{\bm A_t}^2~.
\end{align*}
Otherwise $n_t = 0$, in which case we have $\bm W_{t+1} = \bm W_t$ and $\bm A_{t+1} = \bm A_t$.

Denoting by $k_t=1-\bm z_t^T \bm A_t^{-1} \bm z_t$, by Sherman-Morrison formula, $k_t=\frac{1}{1+\bm z_t^T \bm A_{t-1}^{-1} \bm z_t}$. Summing over all rounds $t \in [T]$ such that $n_t = 1$,
\begin{align*}
   &\frac12 \sum_{t=1}^T n_t \left[ (\inn{\bm W_t}{\bm z_t} + \alpha_t)^2 k_t - (\inn{\bm U}{\bm z_t} + \alpha_t)^2 \right] \\
   &\quad \leq \frac 1 2 \| \bm U \|_{\bm A_0}^2 - \frac 1 2 \| \bm U - \bm W_{T+1} \|_{\bm A_T}^2\leq  \frac a 2 \| \bm U \|_F^2~.
\end{align*}
We also have by definition of $m_t$,
\begin{align*}
&(\inn{\bm W_t}{\bm z_t} + \alpha_t)^2 k_t - (\inn{\bm U}{\bm z_t} + \alpha_t)^2 \\
&\quad =  m_t - 2 \inn{\bm U}{\bm g_t} - \inn{\bm U}{\bm z_t}^2 -\alpha_t^2 \bm z_t^T \bm A_t^{-1} \bm z_t~.
\end{align*}
Putting all together and using the fact that $\sum_{t=1}^T n_t m_t\geq 0$, we have the stated bound.
\end{proof}
\vspace{-0.2cm}
We can now prove the following mistake bound for the prediction $\hat{y}_t$, defined as
$\hat{M}_T := \sum_{t=1}^T \one[\hat{y}_t \neq y_t]$.
\begin{theorem}
For any $\bm U \in \R^{k \times d}$,
and any $0 < \eta \leq \min(1, \frac{2}{2\max_i \|\bm u_i\| X + 1})$, the expected number of mistakes committed by $\hat{y}_t$ can be bounded as
\begin{align*}
\E\left[\hat{M}_T \right]
&\leq L_\eta(\bm U)+\frac{a\eta \| \bm U \|_F^2}{2-\eta}
 + \frac{k \sum_{t=1}^T \E[\bm n_t z_t^T \bm A_t^{-1} \bm z_t]}{\gamma \eta(2-\eta) } \\
&\leq L_\eta(\bm U)+\frac{a\eta \| \bm U \|_F^2}{2-\eta}
 + \frac{d k^2 \ln\left(1 + \frac{2T\,X^2}{a\,d\,k}\right)}{\gamma \eta(2-\eta) },
\end{align*}
where $L_\eta(\bm U) := \sum_{t=1}^T \ell_\eta(\bm U, (\bm x_t, y_t))$
is the $\eta$-loss of the linear predictor $\bm U$.
\label{thm:yhat}
\end{theorem}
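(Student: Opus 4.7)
}
The idea is to apply Lemma~\ref{lem:vaw} not to $\bm U$ directly, but to the rescaled competitor $\eta\bm U$, then take expectations and invoke Lemma~\ref{lem:nt} so that the left-hand side becomes a sum over $h_t(2\Delta_t - \eta\Delta_t^2)$, where $\Delta_t := (\bm U\bm x_t)_{y_t} - (\bm U\bm x_t)_{\bar y_t}$. The quadratic $2x - \eta x^2$ is tailor-made to match the loss family $\ell_\eta$: for $x \leq \tfrac{2-\eta}{\eta}$ one verifies directly (from the piecewise definition of $\ell_\eta$) that
\[
2x - \eta x^2 \;\geq\; (2-\eta)\bigl(1 - \ell_\eta(x)\bigr),
\]
with equality on $x \leq 1$ and with the right-hand side equal to $0$ (and the left-hand side non-positive) on $[1,\tfrac{2-\eta}{\eta}]$. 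The condition $\eta \leq \tfrac{2}{2\max_i\|\bm u_i\|X + 1}$ is precisely what ensures $|\Delta_t| \leq 2\max_i\|\bm u_i\|X \leq \tfrac{2-\eta}{\eta}$ by Cauchy--Schwarz, so we may apply the above inequality pointwise with $x = \Delta_t$.

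Concretely, substituting $\eta\bm U$ for $\bm U$ in Lemma~\ref{lem:vaw} and dividing by $\eta$ yields
\[
\sum_{t=1}^T n_t\bigl(2\inn{\bm U}{-\bm g_t} - \eta\inn{\bm U}{\bm z_t}^2\bigr)
\;\leq\; a\eta\|\bm U\|_F^2 + \tfrac{1}{\eta}\sum_{t=1}^T n_t \bm g_t^\top \bm A_t^{-1}\bm g_t.
\]
Taking expectations and applying Lemma~\ref{lem:nt} turns the left side into $\sum_t \E[h_t(2\Delta_t - \eta\Delta_t^2)]$. Lower-bounding each summand by $(2-\eta)\bigl(1 - \ell_\eta(\Delta_t)\bigr) h_t$ and rearranging gives
\[
(2-\eta)\sum_{t=1}^T \E[h_t]
\;\leq\; (2-\eta)\sum_{t=1}^T \E[h_t\,\ell_\eta(\Delta_t)] + a\eta\|\bm U\|_F^2 + \tfrac{1}{\eta}\sum_{t=1}^T \E[n_t \bm g_t^\top \bm A_t^{-1}\bm g_t].
\]
Since $\bar y_t$ is only the runner-up with respect to $\bm W_t$ (not $\bm U$), we have $\Delta_t \geq (\bm U\bm x_t)_{y_t} - \max_{i\neq y_t}(\bm U\bm x_t)_i$, and non-increasingness of $\ell_\eta$ on the relevant range gives $\ell_\eta(\Delta_t) \leq \ell_\eta(\bm U,(\bm x_t,y_t))$, hence $\sum_t \E[h_t \ell_\eta(\Delta_t)] \leq L_\eta(\bm U)$ using $h_t \leq 1$. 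Combined with the relation $\bm g_t = \tfrac{1}{\sqrt{p_{t,y_t}}}\bm z_t$ and the $\gamma$-greedy lower bound $p_{t,y_t} \geq \gamma/k$, one obtains $\E[n_t \bm g_t^\top \bm A_t^{-1}\bm g_t] \leq \tfrac{k}{\gamma}\E[n_t \bm z_t^\top \bm A_t^{-1}\bm z_t]$. Dividing through by $2-\eta$ and using $\E[\hat M_T] \leq \sum_t \E[h_t]$ (since $h_t \geq \one[\hat y_t \neq y_t]$) yields the first displayed inequality of the theorem.

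For the second displayed inequality, bound $\sum_{t=1}^T n_t \bm z_t^\top \bm A_t^{-1}\bm z_t$ by the standard log-determinant telescoping potential $\log\det(\bm A_T) - \log\det(\bm A_0)$, then apply concavity of $\log\det$ on positive definite matrices to get $\log\det(\bm A_T/a) \leq kd\log\bigl(1 + \tfrac{\tr(\bm A_T - a\bm I)}{akd}\bigr)$. It remains to take expectations: since $\|\bm z_t\|^2 = \tfrac{2\|\bm x_t\|^2}{p_{t,y_t}}$ and $\E_{t-1}[n_t] = p_{t,y_t}q_t$, we have $\E[n_t\|\bm z_t\|^2] \leq 2X^2$, so $\E[\tr(\sum_t n_t \bm z_t\bm z_t^\top)] \leq 2TX^2$, and Jensen's inequality (concavity of $\log(1+\cdot)$) finishes the bound.

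The main obstacle, and the point where the proof departs from a routine Perceptron argument, is the algebraic inequality $2x - \eta x^2 \geq (2-\eta)(1-\ell_\eta(x))$ and the careful rescaling $\bm U \mapsto \eta\bm U$ needed to line up the coefficient $\eta$ on the quadratic term; the constraint on $\eta$ in the theorem statement is exactly what keeps $\Delta_t$ inside the regime where this inequality is valid.
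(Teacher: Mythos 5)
Your argument is correct and follows the paper's proof essentially step for step: rescale the competitor to $\eta\bm U$ in Lemma~\ref{lem:vaw}, take expectations via Lemma~\ref{lem:nt}, compare the resulting quadratic in the margin to $\ell_\eta$ on the range guaranteed by the constraint on $\eta$, and invoke the log-determinant potential (the paper's Lemma~\ref{lem:adaptivenorms}) for the second inequality; your inequality $2x-\eta x^2\ge(2-\eta)\bigl(1-\ell_\eta(x)\bigr)$ is just an algebraic rearrangement of the paper's $f(x)\le\ell_\eta(x)$ with $f(x)=1-\frac{2}{2-\eta}x+\frac{\eta}{2-\eta}x^2$. One parenthetical slip worth fixing: on $[1,\frac{2-\eta}{\eta}]$ the right-hand side of your displayed inequality equals $2-\eta>0$ (not $0$) and the left-hand side $2x-\eta x^2$ is also at least $2-\eta$ there (it is a concave quadratic taking the value $2-\eta$ at both endpoints), which is what actually makes the inequality hold; the description you wrote (right side $0$, left side non-positive) applies to the equivalent form $f(x)\le\ell_\eta(x)=0$, not to the form you displayed.
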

\begin{proof}
Using Lemma~\ref{lem:vaw} with $\eta \bm U$, we get that
\begin{align*}
\sum_{t=1}^T n_t \left(2\eta \inn{\bm U}{-\bm g_t} - \eta^2 \inn{\bm U}{\bm z_t}^2 \right) \\
\leq a \eta^2 \| \bm U\|_F^2
+ \sum_{t=1}^T n_t \bm g_t^T \bm A_t^{-1} \bm g_t~.
\end{align*}

Taking expectations, using Lemma~\ref{lem:nt} and the fact that $\frac{1}{p_{t,y_t}} \leq \frac{k}{\gamma}$ and that $A_t$ is positive definite, we have
\begin{equation}
\label{eq:thm_haty_eq1}
\begin{split}
0 & \leq -\E \left[\sum_{t=1}^T h_t \cdot 2 \eta \inn{\bm U}{(\bm e_{y_t} - \bm e_{\bar{y}_t}) \otimes \bm x_t}\right] \\
& + \E \left[\sum_{t=1}^T h_t \cdot \eta^2 (\inn{\bm U}{(\bm e_{y_t} - \bm e_{\bar{y}_t}) \otimes \bm x_t})^2\right] \\
&\quad + a  \eta^2 \| \bm U \|_F^2+ \frac{k} {\gamma} \sum_{t=1}^T \E[n_t \bm z_t^T \bm A_t^{-1} \bm z_t]~.
\end{split}
\end{equation}
Add the terms
$\eta(2-\eta) \E\left[\sum_{t=1}^T h_t \right]$ to both sides and divide both sides by $\eta(2-\eta)$, to have
\begin{align*}
\E&\left[\sum_{t=1}^T h_t \right]
\leq \E\left[\sum_{t=1}^T h_t f(\inn{\bm U}{(\bm e_{y_t} - \bm e_{\bar{y}_t}) \otimes \bm x_t})\right] \\
&\quad +\frac{a\eta}{2-\eta} \| \bm U \|_F^2+\frac{k}{\gamma \eta(2-\eta)} \sum_{t=1}^T \E[n_t \bm z_t^T \bm A_t^{-1} \bm z_t],
\end{align*}
where $f(z) := 1 - \frac{2}{2-\eta} z + \frac{\eta}{2-\eta} z^2$.
Taking a close look at the function $f$, we observe that the two roots of the quadratic function  are $1$ and $\frac{2-\eta}{\eta}$,
respectively. Setting $\eta\leq 1$, the function is negative in $(1, \frac{2-\eta}{\eta}]$ and positive in $(-\infty, 1]$.
Additionally, if $0 < \eta \leq \frac{2}{2\max_i \|\bm u_i\|_2 X + 1}$, then for all $i,j \in [k]$, $\inn{\bm U}{(\bm e_i - \bm e_j) \otimes \bm x_t} \leq \frac{2-\eta}{\eta}$. Therefore, we have that
\begin{align*}
&f(\inn{\bm U}{(\bm e_{y_t} - \bm e_{\bar{y}_t}) \otimes \bm x_t}) \\
&\quad = f( (\bm U \bm x_t)_{y_t} - (\bm U \bm x_t)_{\bar y_t} ) \\
&\quad \leq \ell_\eta\left( (\bm U \bm x_t)_{y_t} - (\bm U \bm x_t)_{\bar y_t} \right) \\
&\quad \leq \ell_\eta\left( (\bm U \bm x_t)_{y_t} - \max_{r \neq y_t}(\bm U \bm x_t)_r \right)
=\ell_\eta(\bm U, (\bm x_t,y_t))~.
\end{align*}
where the first equality is from algebra, the first inequality is from that
$f(\cdot) \leq \ell_\eta(\cdot)$ in $(-\infty, \frac{2-\eta}{\eta}]$,
the second inequality is from
that $\ell_\eta(\cdot)$ is monotonically decreasing.

Putting together the two constraints on $\eta$, and noting that $\hat{M}_T \leq \sum_{t=1}^T h_t$, we have the first bound.

The second statement follows from Lemma~\ref{lem:adaptivenorms} below.
\end{proof}

\begin{lemma}
If $d \geq 1$, $k \geq 2$, $T \geq 2$, then
\[ \sum_{t=1}^T \E[n_t \bm z_t^T \bm A_t^{-1} \bm z_t] \leq d k \ln\left(1 + \frac{2X^2 T}{a\,d \, k}\right)~. \]
Specifically, if $a = X^2$, the right hand side is $ \leq d k \ln T$.
\label{lem:adaptivenorms}
\end{lemma}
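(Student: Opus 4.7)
The plan is to combine a standard log-determinant potential argument with a trace bound obtained in expectation, which is the canonical recipe for second-order online learning regret proofs.

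First, I would reduce the sum to a log-determinant. When $n_t = 1$ we have $\bm A_t = \bm A_{t-1} + \bm z_t \bm z_t^T$, so by the matrix determinant lemma $\det(\bm A_t) = \det(\bm A_{t-1}) \bigl(1 + \bm z_t^T \bm A_{t-1}^{-1} \bm z_t\bigr)$, and by the Sherman-Morrison formula $\bm z_t^T \bm A_t^{-1} \bm z_t = \frac{u_t}{1+u_t}$ where $u_t := \bm z_t^T \bm A_{t-1}^{-1} \bm z_t$. Since $\frac{u}{1+u} \leq \ln(1+u)$ for $u \geq 0$, each term with $n_t = 1$ satisfies $\bm z_t^T \bm A_t^{-1} \bm z_t \leq \ln\!\bigl(\det(\bm A_t)/\det(\bm A_{t-1})\bigr)$; terms with $n_t = 0$ contribute zero and satisfy $\det(\bm A_t) = \det(\bm A_{t-1})$ anyway. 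Telescoping over $t$ and using $\det(\bm A_0) = a^{kd}$ gives
\[
\sum_{t=1}^T n_t \bm z_t^T \bm A_t^{-1} \bm z_t \leq \ln \det(\bm A_T) - kd \ln a.
\]

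Next I would bound $\det(\bm A_T)$ by AM-GM on its (positive) eigenvalues: $\det(\bm A_T) \leq \bigl(\tr(\bm A_T)/(kd)\bigr)^{kd}$. Hence $\ln \det(\bm A_T) \leq kd \ln\!\bigl(\tr(\bm A_T)/(kd)\bigr)$, and the task reduces to bounding $\E[\tr(\bm A_T)]$. Since $\tr(\bm A_T) = kd\,a + \sum_{t=1}^T n_t \|\bm z_t\|^2$ and $\bm z_t = \sqrt{p_{t,y_t}}\bm g_t$ with $\|\bm g_t\|^2 = \frac{2\|\bm x_t\|^2}{p_{t,y_t}^2}$ (using $\|(\bm e_{\bar y_t} - \bm e_{y_t}) \otimes \bm x_t\|^2 = 2\|\bm x_t\|^2$), we have $n_t \|\bm z_t\|^2 \leq \one[\tilde y_t = y_t] \cdot \frac{2\|\bm x_t\|^2}{p_{t,y_t}}$, and taking the conditional expectation yields $\E_{t-1}[n_t \|\bm z_t\|^2] \leq 2\|\bm x_t\|^2 \leq 2X^2$. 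Therefore $\E[\tr(\bm A_T)] \leq kd\,a + 2X^2 T$.

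Finally I would combine the pieces via Jensen's inequality (concavity of $\ln$):
\[
\sum_{t=1}^T \E[n_t \bm z_t^T \bm A_t^{-1} \bm z_t] \leq kd \cdot \E\!\left[\ln\!\frac{\tr(\bm A_T)}{kd}\right] - kd \ln a \leq kd \ln\!\frac{\E[\tr(\bm A_T)]}{kd\,a} \leq kd \ln\!\left(1 + \frac{2X^2 T}{a\,d\,k}\right),
\]
which is the first claim. For the second claim, set $a = X^2$ and note that the argument becomes $1 + \frac{2T}{dk}$; a short numerical check using $d \geq 1$, $k \geq 2$, $T \geq 2$ shows $1 + \frac{2T}{dk} \leq T$, giving the clean bound $dk \ln T$.

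The only subtle step is the order of Jensen and AM-GM (one must apply AM-GM \emph{before} taking the expectation, then use Jensen to push $\E$ inside the outer $\ln$); everything else is routine once the log-determinant potential is set up. I do not foresee real obstacles beyond verifying the corner case $dk = 2$, $T = 2$ for the second bound.
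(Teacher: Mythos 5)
Your proof follows essentially the same route as the paper's: telescope $n_t \bm z_t^T \bm A_t^{-1} \bm z_t$ against $\ln\frac{|\bm A_T|}{|\bm A_0|}$, bound the determinant by the trace via AM--GM on the eigenvalues, control $\E[\tr(\bm A_T)]$ using $\E_{t-1}\left[\one[\tilde y_t = y_t]/p_{t,y_t}\right]=1$, and finish with Jensen's inequality; the only difference is that you derive the log-determinant step explicitly from Sherman--Morrison rather than citing it. One caveat: your ``short numerical check'' that $1+\frac{2T}{dk}\le T$ is actually false when $dk=2$ (e.g.\ $d=1$, $k=2$ gives $1+T \not\le T$) and when $dk=3$, $T=2$, so the final $dk\ln T$ claim fails in those corner cases --- but this defect is present in the paper's own proof as well, and it is harmless where the lemma is invoked since the discrepancy is absorbed into the $O(\cdot)$ notation.
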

\begin{proof}
Observe that
\begin{align*}
&\sum_{t=1}^T n_t \bm z_t^T \bm A_t^{-1} \bm z_t
\leq \ln \frac{|\bm A_T|}{|\bm A_0|} \\
&\quad \leq d \, k \ln\left(1+\frac{2 X^2 \sum_{t=1}^T \frac{\one[\tilde{y}_t = y_t ]}{p_{t,y_t}}}{a \, d \, k}\right),
\end{align*}
where the first inequality is a well-known fact from linear algebra~\citep[e.g.][Lemma 11]{Hazan-Agarwal-Kale-2007}.
Given that the $\bm A_T$ is $kd \times kd$, the second inequality comes from the fact that $|\bm A_T|$ is maximized when all its eigenvalues are equal to
$\frac{\tr(\bm A_T)}{d \, k} = a+\frac{\sum_{t=1}^T n_t \| \bm z_t \|^2}{d \, k} \leq a+\frac{2 X^2 \sum_{t=1}^T \frac{\one[\tilde{y}_t = y_t ]}{p_{t,y_t}}}{d \, k}$.
Finally, using Jensen's inequality, we have that,
\[
\sum_{t=1}^T \E[\bm n_t z_t^T \bm A_t^{-1} \bm z_t] \leq d \, k \ln\left(1 + \frac{2X^2 T}{a\,d \, k}\right)~.
\]
If $a = X^2$, then the right hand side is $d \, k \ln(1 + \frac{2T}{d \, k})$,
 which is at most $dk \ln T$ under the conditions on $d$, $k$, $T$.
\end{proof}

\begin{proof}[Proof of Theorem~\ref{thm:soba_bound}]
Observe that by triangle inequality, $\one[\tilde y_t \neq y_t] \leq \one[\tilde y_t \neq \hat{y}_t] + \one[y_t \neq \hat{y}_t]$.
Summing over $t$, taking expectation on both sides, we conclude that
\begin{equation}
  \E[M_T] \leq \E[\hat{M}_T] + \gamma T~.
  \label{eqn:greedy}
\end{equation}
The first statement follows from combining the above inequality with Theorem~\ref{thm:yhat}.

For the second statement, first note that from Theorem~\ref{thm:yhat},
and Equation~\eqref{eqn:greedy}, we have
\begin{align*}
  \E[M_T] &\leq L_\eta(\bm U)+\frac{a\eta \| \bm U \|_F^2}{2-\eta}  + \frac{d k^2 \ln\left(1 + \frac{2T\,X^2}{a\,d\,k}\right)}{\gamma \eta(2-\eta) }  + \gamma T\\
&\leq  L_\eta(\bm U)+ X^2 \| \bm U \|_F^2  + \frac{2 d\, k^2 \, \ln T}{\gamma \eta }  + \gamma T,
\end{align*}
where the second inequality is from that $\eta \leq 1$, and
Lemma~\ref{lem:adaptivenorms} with $a = X^2$. The statement is concluded
by the setting of $\gamma = O(\sqrt{\frac{k^2d\ln T}{T}})$.
\end{proof}

\begin{proof}[Proof of Lemma~\ref{lem:nt}]
We show the lemma in two steps.
Let $G_t := q_t \cdot \one[y_t = \tilde{y}_t = \hat{y}_t]$, and
$H_t := \one[y_t = \tilde{y}_t \neq \hat{y}_t]$.

First, we show that $n_t = G_t + H_t$. Recall that SOBA maintains the invariant~\eqref{eqn:invariant}, hence
$\sum_{s=1}^{t-1} n_s m_s \geq 0$.
From line 14 of SOBA, we see that $n_t = 1$ only if $\tilde{y}_t = y_t$. Now consider
two cases:
\begin{itemize}
  \item $y_t = \tilde{y}_t \neq \hat{y}_t$. In this case, $\bar{y}_t = \hat{y}_t$, therefore
  $\inn{\bm W_t}{\bm g_t} \geq 0$, making $m_t \geq 0$. This implies that $\sum_{s=1}^{t-1} n_s m_s + m_t \geq 0$,
  guaranteeing $n_t = 1$.

  \item $y_t = \tilde{y}_t = \hat{y}_t$. In this case, $n_t$ is set to $1$ if and only
  if $q_t = 1$, i.e. $\sum_{s=1}^{t-1} n_s m_s + m_t \geq 0$.
\end{itemize}
This gives that $n_t = G_t + H_t$.

Second, we have the following two equalities:
 \begin{align*}
   &\E_{t-1} \left[ H_t \inn{\bm U}{\bm g_t} \right] \\
   &= \E_{t-1} \left[ \frac{\one[\tilde{y}_t = y_t]}{p_{t,y_t}} \one[\hat{y}_t \neq y_t] \inn{\bm U}{(\bm e_{y_t} - \bm e_{\bar{y}_t}) \otimes \bm x_t} \right] \\
   &= \one[\hat{y}_t \neq y_t] \inn{\bm U}{(\bm e_{y_t} - \bm e_{\bar{y}_t}) \otimes \bm x_t},
 \end{align*}
 \begin{align*}
   &\E_{t-1} \left[ G_t \inn{\bm U}{\bm g_t} \right] \\
   &= \E_{t-1} \left[ \frac{\one[\tilde{y}_t = y_t]}{p_{t,y_t}}  \one[\hat{y}_t = y_t]  q_t  \inn{\bm U}{(\bm e_{y_t} - \bm e_{\bar{y}_t}) \otimes \bm x_t} \right] \\
   &= \one[\hat{y}_t \neq y_t] q_t \inn{\bm U}{(\bm e_{y_t} - \bm e_{\bar{y}_t}) \otimes \bm x_t}~.
 \end{align*}
The first statement follows from adding up the two equalities above.

The proof for the second statement is identical, except replacing $\inn{\bm U}{(\bm e_{y_t} - \bm e_{\bar{y}_t}) \otimes \bm x_t}$ with $\inn{\bm U}{(\bm e_{y_t} - \bm e_{\bar{y}_t}) \otimes \bm x_t}^2$.
\end{proof}

\subsection{Fall-Back Analysis}
\label{sec:fallback}

The loss function $\ell_{\eta}$ is an interpolation between the hinge and the
squared hinge losses. Yet, the bound becomes vacuous for $\eta=0$.
Hence, in this section we show that SOBA also guarantees a $\tilde O((L_0(\bm U)T)^{1/3} + \sqrt{T})$ mistake bound w.r.t. $L_0(\bm U)$, the multiclass hinge loss of the competitor, assuming $L_0(\bm U)$ is known.
Thus the algorithm achieves a mistake guarantee no worse than the sharpest
bound implicit in~\citet{Kakade-Shalev-Shwartz-Tewari-2008}.

\begin{theorem}
Set $a=X^2$ and denote by $M_T$ the number of mistakes done by SOBA. Then SOBA has the following guarantees:\footnote{Assuming the knowledge of $\|\bm U\|_F$ it would be possible to reduce the dependency on $\|\bm U\|_F$ in both bounds. However such assumption is extremely unrealistic and we prefer not to pursue it.}
\begin{enumerate}
\item If $L_0(\bm U) \geq (\|\bm U\|_F^2 + 1) \sqrt{ d k^2 X^2 T \ln T}$, then with parameter setting
$\gamma = \min(1,(\frac{d k^2 X^2 L_0(\bm U) \ln T}{T^2})^{1/3})$, one has the following expected mistake bound:
\begin{align*}
\E [&M_T] \leq L_0(\bm U)\\
& + O\Big(\|\bm U\|_F(d\, k^2 X^2 L_0(\bm U) T \ln T)^{1/3} \Big)~.
\end{align*}
\item If $L_0(\bm U) < (\|\bm U\|_F^2 + 1) \sqrt{ d k^2 X^2 T \ln T}$, then with parameter setting
$\gamma = \min(1,(\frac{d\,k^2 X^2 \ln T}{T})^{1/2})$,
one has the following expected mistake bound:
\[
\E [M_T] \leq L_0(\bm U) + O\left(k (\|\bm U\|_F^2 + 1) X \sqrt{d T \ln T}\right)~.
\]
\end{enumerate}
where $L_0(\bm U) := \sum_{t=1}^T \ell_0(\bm U, (\bm x_t, y_t))$ is the hinge loss of the linear classifier $\bm U$.
\end{theorem}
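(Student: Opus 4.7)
The plan is to derive the theorem as a corollary of Theorem~\ref{thm:soba_bound} by (i) upper-bounding the $\eta$-loss $L_\eta(\bm U)$ in terms of the hinge loss $L_0(\bm U)$, and then (ii) tuning $\eta$ and $\gamma$ differently in the two regimes of $L_0(\bm U)$ stated in the theorem.

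First I would invoke Theorem~\ref{thm:soba_bound} in the form established in its proof: with $a = X^2$ and any admissible $\eta \in (0, \min(1, \tfrac{2}{2\max_i\|\bm u_i\|X + 1})]$,
\[
\E[M_T] \leq L_\eta(\bm U) + O\!\left(\eta X^2 \|\bm U\|_F^2 + \frac{d k^2 \ln T}{\gamma \eta}\right) + \gamma T.
\]
Next, I would derive the pointwise identity $\ell_\eta(m) - \ell_0(m) = \frac{\eta}{2-\eta}m(m-1)$ for $m \leq 1$ directly from \eqref{eq:binary_family}. This is non-positive on $m \in [0,1]$; for $m < 0$ it equals $\frac{\eta}{2-\eta}|m|(|m|+1) = \frac{\eta}{2-\eta}(\ell_0(m)-1)\ell_0(m)$, which using $|m_t| \leq 2\|\bm U\|_F X$ is at most $\frac{\eta}{2-\eta}(1 + 2\|\bm U\|_F X)\,\ell_0(m_t)$. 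Summing over $t$ yields $L_\eta(\bm U) - L_0(\bm U) \leq O(\eta(\|\bm U\|_F X + 1) L_0(\bm U))$. Combining,
\[
\E[M_T] - L_0(\bm U) \leq O\!\left(\eta(\|\bm U\|_F X + 1) L_0(\bm U) + \eta X^2 \|\bm U\|_F^2 + \frac{d k^2 \ln T}{\gamma \eta}\right) + \gamma T.
\]

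I would then handle the two cases. In Case 1, with the prescribed $\gamma = (dk^2 X^2 L_0(\bm U)\ln T/T^2)^{1/3}$ (so $\gamma T = (dk^2 X^2 L_0(\bm U) T \ln T)^{1/3}$), I would pick $\eta$ of order $(d k^2 T \ln T)^{1/3}/(X^{1/3} L_0(\bm U)^{2/3})$, which makes each of the three remaining terms at most $\|\bm U\|_F \gamma T$. The Case 1 assumption $L_0(\bm U) \geq (\|\bm U\|_F^2 + 1)\sqrt{d k^2 X^2 T \ln T}$ is precisely what is needed to place this $\eta$ in the admissible range $(0, \min(1, 1/(\|\bm U\|_F X))]$. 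In Case 2, with $\gamma = \sqrt{d k^2 X^2 \ln T / T}$ (so $\gamma T = k X\sqrt{d T \ln T}$), I would set $\eta$ at its maximum admissible value $\Theta(1/(\|\bm U\|_F X + 1))$. Then $L_\eta(\bm U) - L_0(\bm U) = O(L_0(\bm U))$, which by the Case 2 assumption is $O((\|\bm U\|_F^2 + 1) k X \sqrt{d T \ln T})$; the remaining terms $\eta X^2\|\bm U\|_F^2$ and $d k^2 \ln T/(\gamma\eta)$ work out to $O(X\|\bm U\|_F)$ and $O(k\|\bm U\|_F\sqrt{d T \ln T})$ respectively, both dominated by $k(\|\bm U\|_F^2+1)X\sqrt{d T \ln T}$.

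The main obstacle is the second step: the pointwise gap $\ell_\eta - \ell_0$ can be a factor of $\Theta(\|\bm U\|_F X)$ larger than $\ell_0$, so a naive conversion would translate to a large multiplicative blowup of the $L_0$ term. The case split is needed precisely because this blowup can only be tolerated when $L_0(\bm U)$ is itself small (Case 2), whereas in Case 1 one must choose $\eta$ very small to control it, and the Case 1 hypothesis on $L_0(\bm U)$ is exactly the quantitative condition ensuring such a small $\eta$ still lies in the admissible range dictated by Theorem~\ref{thm:soba_bound}.
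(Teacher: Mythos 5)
Your proposal is correct in substance but takes a genuinely different route from the paper's. The paper does not use Theorem~\ref{thm:soba_bound} as a black box: it goes back to the intermediate inequality \eqref{eq:thm_haty_eq1} inside the proof of Theorem~\ref{thm:yhat}, adds $2\eta\,\E[\sum_t h_t]$ and divides by $2\eta$ (rather than by $\eta(2-\eta)$), so that the linear term directly yields $1-\inn{\bm U}{(\bm e_{y_t}-\bm e_{\bar y_t})\otimes\bm x_t}\le\ell_0(\bm U,(\bm x_t,y_t))$ while the quadratic term is bounded crudely by Cauchy--Schwarz as $\eta\|\bm U\|_F^2X^2$ per round. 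This removes the upper constraint on $\eta$ entirely, so $\eta$ is optimized without restriction; since the optimum depends on the unknown $\E[\sum_t h_t]$, one gets a self-bounding inequality $\E[\sum_t h_t]\le L_0(\bm U)+\sqrt{c\,(\E[\sum_t h_t]+\tfrac12)}$ that must be solved, after which $\gamma$ is tuned via Lemma~\ref{lem:fallbacktuning}. You instead keep the admissibility constraint $\eta\lesssim 1/(\|\bm U\|_FX+1)$ inherited from Theorem~\ref{thm:soba_bound} and pay for the conversion $L_\eta\to L_0$ through the (correct) pointwise identity $\ell_\eta(m)-\ell_0(m)=\frac{\eta}{2-\eta}m(m-1)$, i.e.\ a multiplicative $(1+O(\eta\,\|\bm U\|_FX))$ blowup of $L_0$; your case split then does double duty, ensuring both that $\gamma$ is tuned correctly and that the balancing $\eta$ is admissible (Case 1) or that the $O(L_0)$ blowup is itself of the order of the regret term (Case 2). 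Both routes deliver the stated rates: the paper's buys an unconstrained $\eta$ and no multiplicative factor on $L_0$ at the cost of an implicit inequality to solve, while yours is more modular and even yields a slightly smaller cross term, $\sqrt{(\|\bm U\|_FX+1)L_0\,dk^2\ln T/\gamma}$ versus the paper's $\|\bm U\|_FX\sqrt{L_0\,dk^2\ln T/\gamma}$. One small repair: your Case-1 choice $\eta\asymp(dk^2T\ln T)^{1/3}X^{-1/3}L_0^{-2/3}$ drops the $\|\bm U\|_F$ dependence, and the claim that every term is at most $\|\bm U\|_F\gamma T$ only holds when $\|\bm U\|_FX=\Omega(1)$; taking the exactly balancing $\eta=\sqrt{dk^2\ln T/\bigl(\gamma(\|\bm U\|_FX+1)L_0\bigr)}$ fixes this, and the Case-1 hypothesis on $L_0(\bm U)$ still guarantees its admissibility (the degenerate regime $\|\bm U\|_FX\ll 1$ is also where the theorem's own $\|\bm U\|_F$ prefactor, as opposed to $\|\bm U\|_F+1$, is loose in the paper's proof).
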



\begin{proof}
Recall that $\hat{M}_T$ the mistakes made by $\hat y_t$, that is $\sum_{t=1}^T \one[\hat{y}_t \neq y_t]$.
Adding to both sides of~\eqref{eq:thm_haty_eq1} the term $\eta \E[\sum_{t=1}^T h_t]$
and dividing both sides by $\eta$, and plugging $a = X^2$, we get that for all $\eta > 0$,
\begin{align*}
  &\E\left[ \sum_{t=1}^T h_t\right]
  \leq \E\left[\sum_{t=1}^T h_t \cdot (1 - \inn{\bm U}{(\bm e_{y_t} - \bm e_{\bar y_t}) \otimes \bm x_t}) \right. \\
  &\quad + \left. \sum_{t=1}^T h_t \cdot \frac{\eta}{2}\inn{\bm U}{(\bm e_{\bar{y}_t} - \bm e_{y_t}) \otimes \bm x_t}^2\right] \\
  &\quad + \frac{\eta X^2}{2} \| \bm U \|_F^2 + \frac{d\,k^2}{2\,\gamma\,\eta} \ln T \\
  &\leq \E\left[\sum_{t=1}^T \ell_0(\bm U, (\bm x_t, y_t)) + \left(\sum_{t=1}^T h_t + \frac12\right) \cdot \eta \|\bm U\|_F^2 X^2\right]  \\
  &\quad + \frac{d\,k^2}{2\,\gamma\,\eta} \ln T ~.
\end{align*}
where the first inequality uses Lemma~\ref{lem:adaptivenorms}, the second
inequality is from Cauchy-Schwarz that
$\inn{\bm U}{(\bm e_{y_t} - \bm e_{\bar y_t}) \otimes \bm x_t} \leq  \| \bm U \|_F \cdot \|(\bm e_{y_t} - \bm e_{\bar y_t}) \otimes \bm x_t \| \leq \|U\|_F \sqrt{2} X$ and that
$(1 - \inn{U}{(\bm e_{y_t} - \bm e_{\bar y_t}) \otimes \bm x_t}) \leq \ell(\bm U, (\bm x_t, y_t))$.

Taking $\eta = \frac{\frac{dk^2}{2\gamma} \ln T}{\|\bm U\|_F^2 (\E[\sum_t h_t] + \frac12) X^2}$,
we have
\begin{align*}
&\E\left[\sum_{t=1}^T h_t\right]
\leq L_0(\bm U)\\
&\quad+\sqrt{ \|\bm U\|_F^2 \left(\E\left[\sum_{t=1}^T h_t\right] + \frac 1 2\right)\frac{d\,k^2\,X^2}{2\gamma} \ln T}\\
&\leq L_0(\bm U) + \sqrt{ \|\bm U\|_F^2 \left( \E\left[\sum_{t=1}^T h_t\right] + \frac12\right) \frac{d\,k^2\,X^2}{\gamma} \ln T},
\end{align*}
where the last inequality is due to the elementary inequality $\sqrt{c+d}\leq \sqrt{c}+\sqrt{d}$, and the
setting of $a = X^2$. Solving the inequality and using the fact that $\E[M_T]\leq \E[\hat{M}_T]+\gamma T \leq \E[\sum_{t=1}^T h_t] + \gamma T$, we have
\resizebox{\linewidth}{!}{
  \begin{minipage}{\linewidth}
  \begin{align*}
    &\E[M_T] \leq L_0(\bm U) + \gamma T \\
    & + O\left( \frac{d\,k^2 \, \|\bm U\|_F^2 \, X^2 \ln T }{\gamma}+  \sqrt{ L_0(\bm U) \frac{d \, k^2 \, \|\bm U\|_F^2 \, X^2 \, \ln T }{\gamma} }\right).
  \end{align*}
  \end{minipage}
}

The theorem follows from Lemma~\ref{lem:fallbacktuning} in Appendix~\ref{sec:appendix1},
taking $U = \|\bm U\|_F^2$, $H = d \, k^2 \, X^2 \, \ln T$, $L = L_0(\bm U)$.
\end{proof}

\section{Empirical Results}
\label{sec:exp}

\begin{figure*}[h!]
  \begin{tabular}{cccc}
  \includegraphics[width=0.22\textwidth]{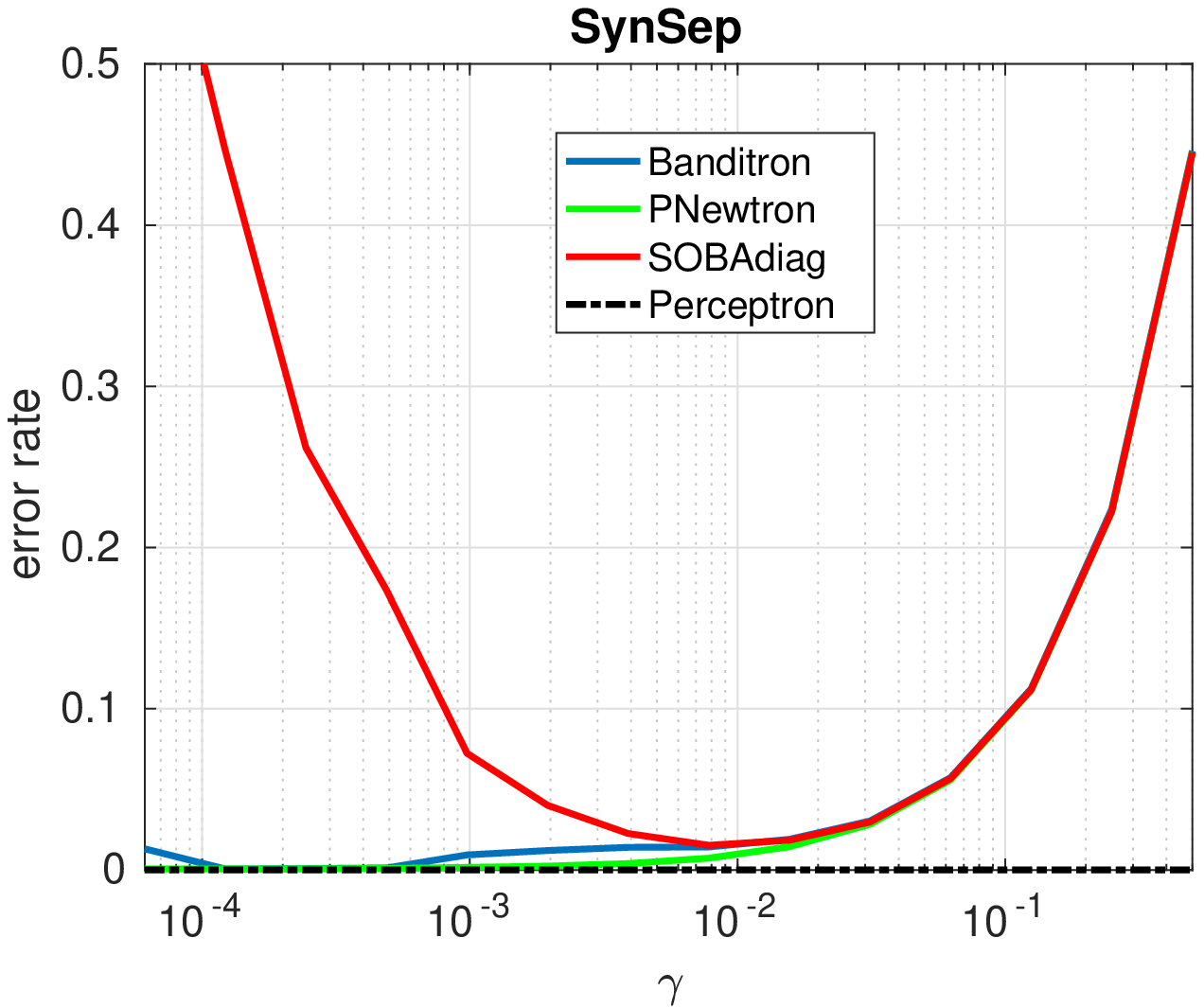} &
  \includegraphics[width=0.22\textwidth]{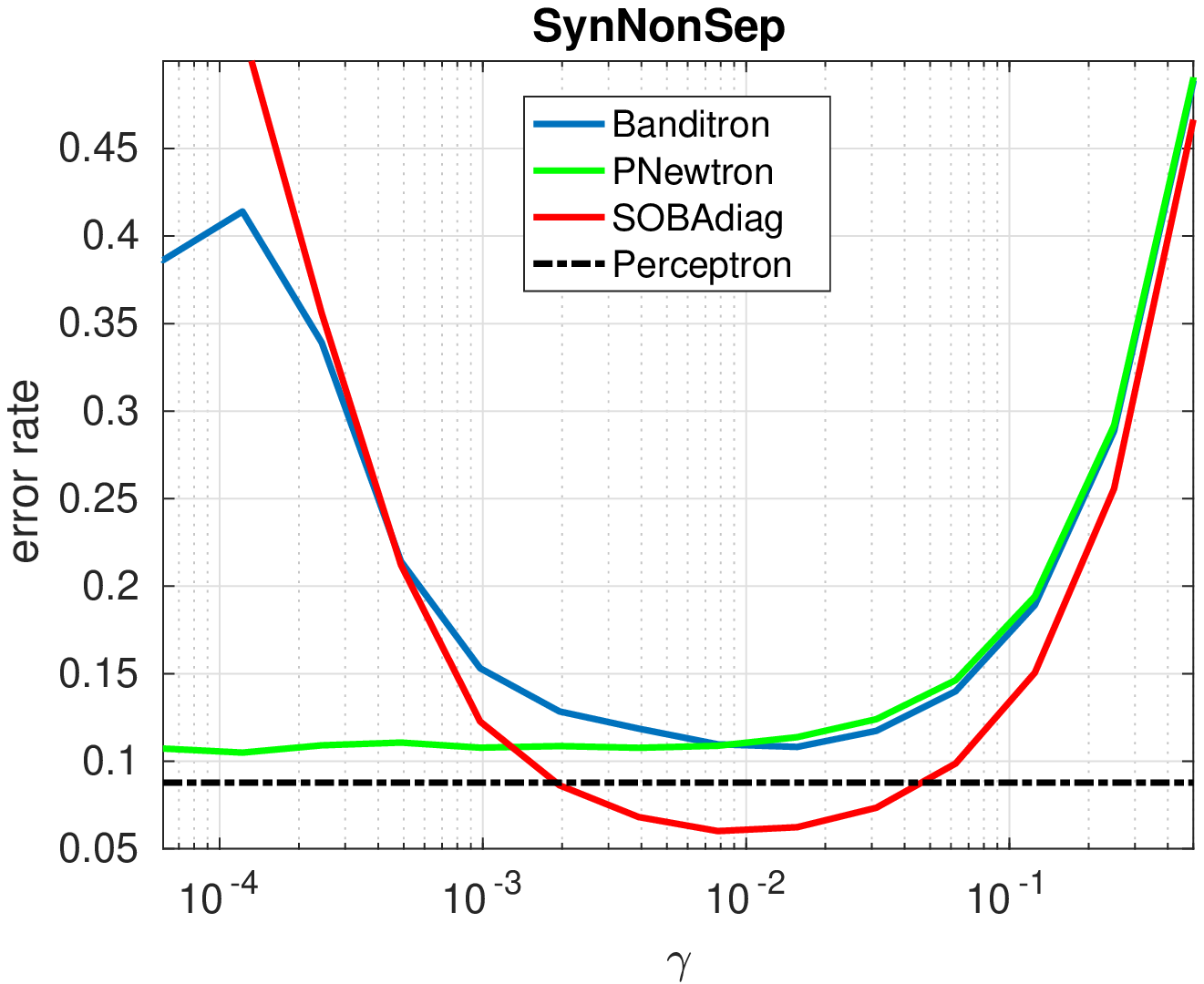} &
  \includegraphics[width=0.22\textwidth]{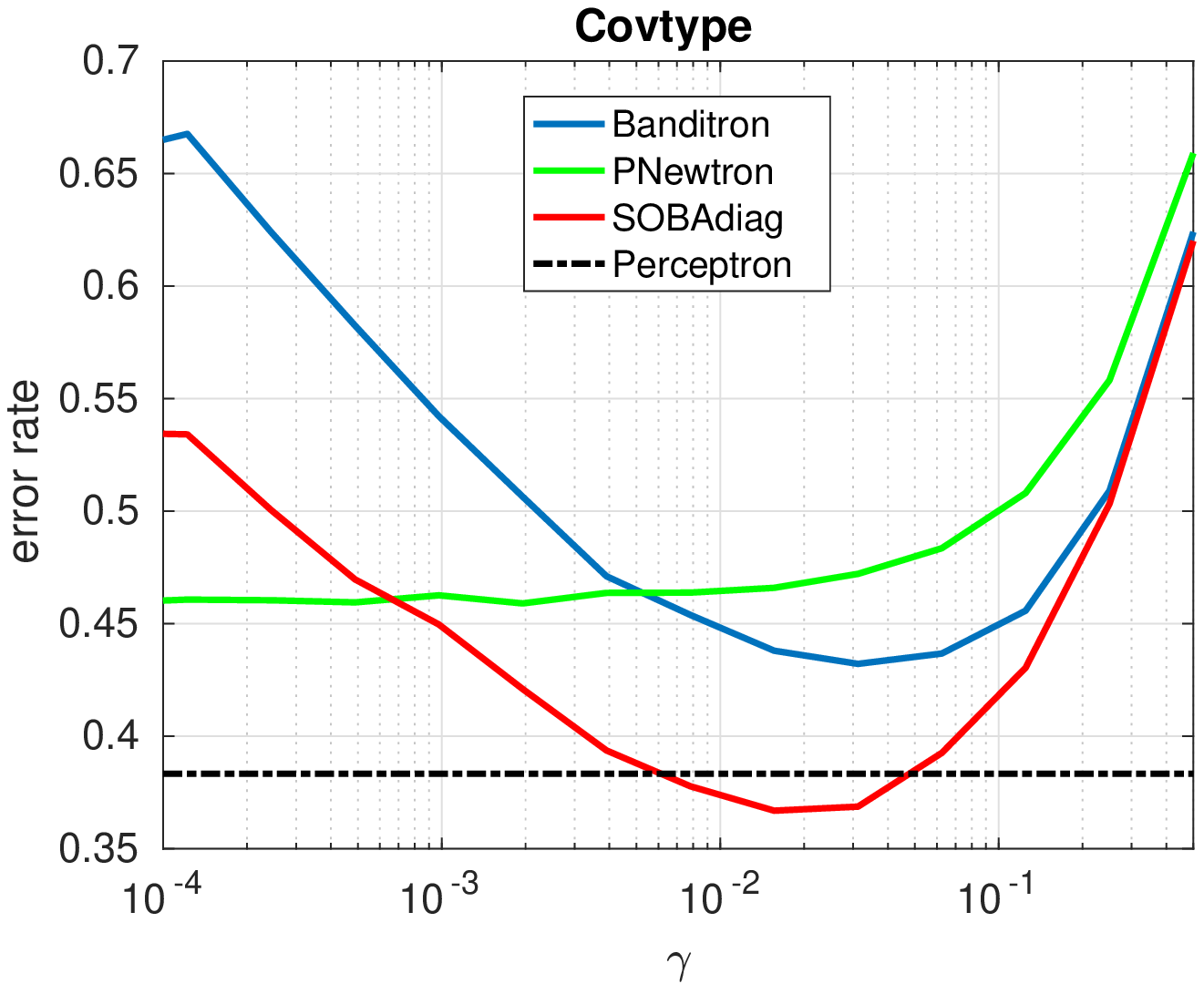} &
  \includegraphics[width=0.22\textwidth]{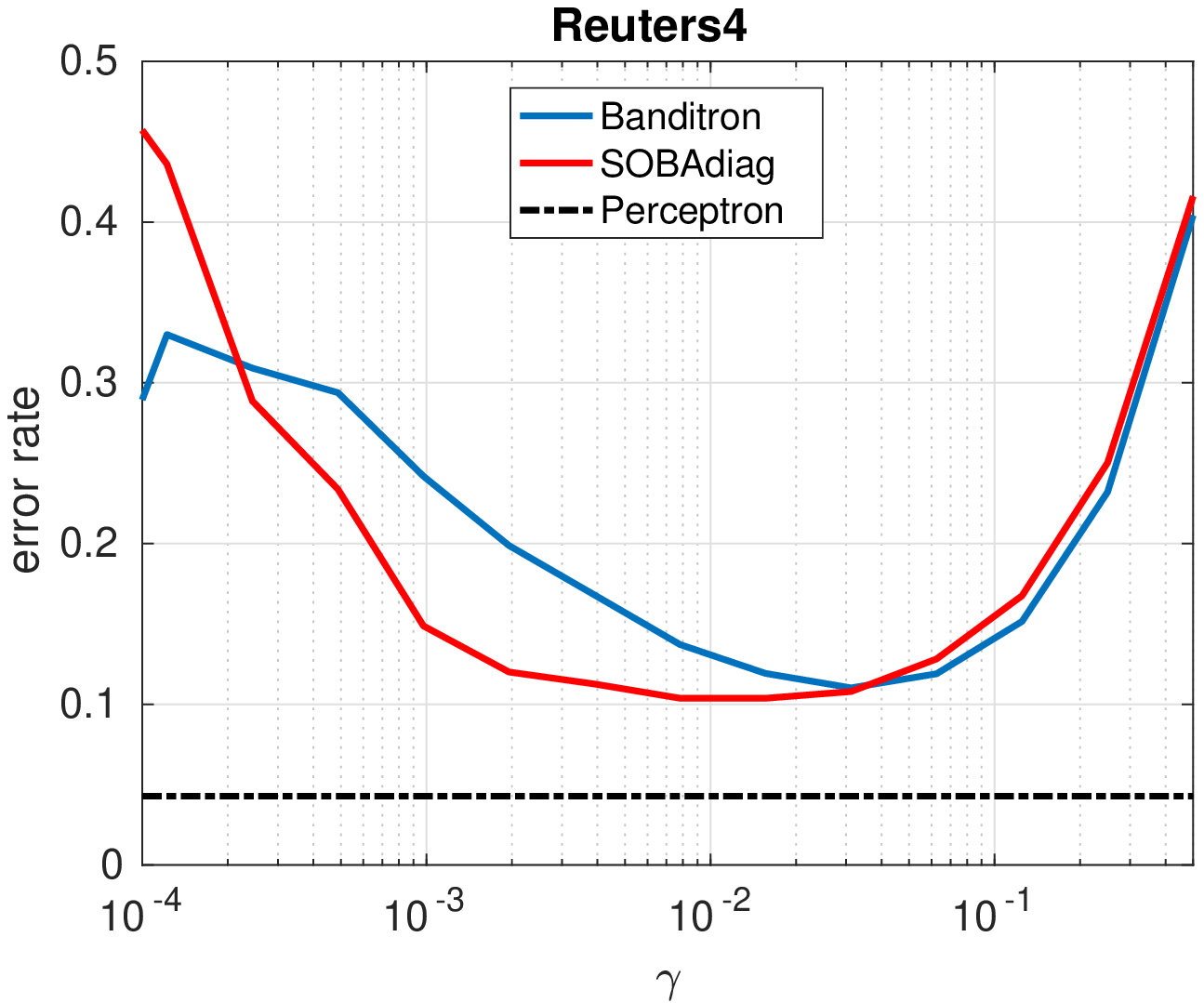} \\
  \includegraphics[width=0.22\textwidth]{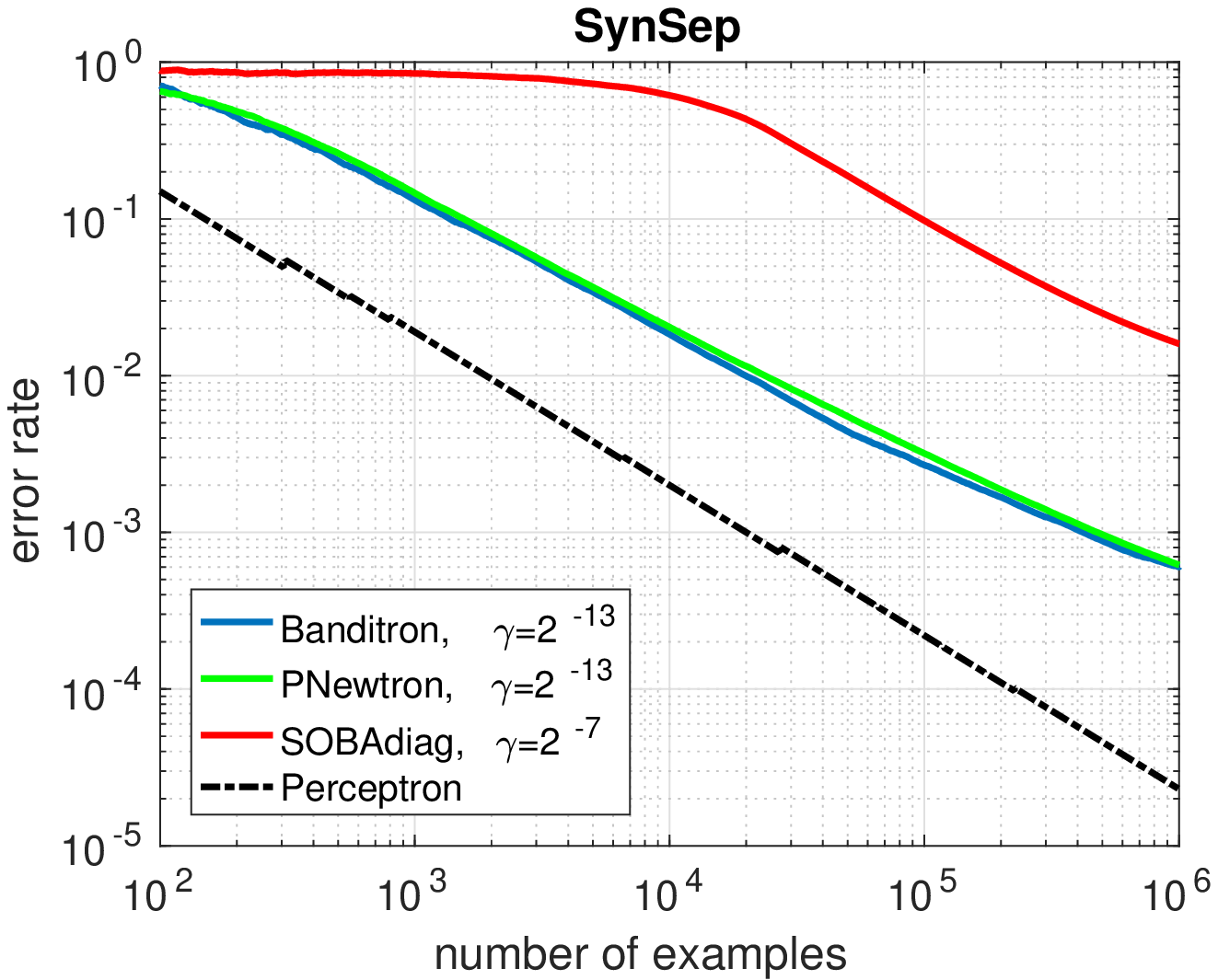} &
  \includegraphics[width=0.22\textwidth]{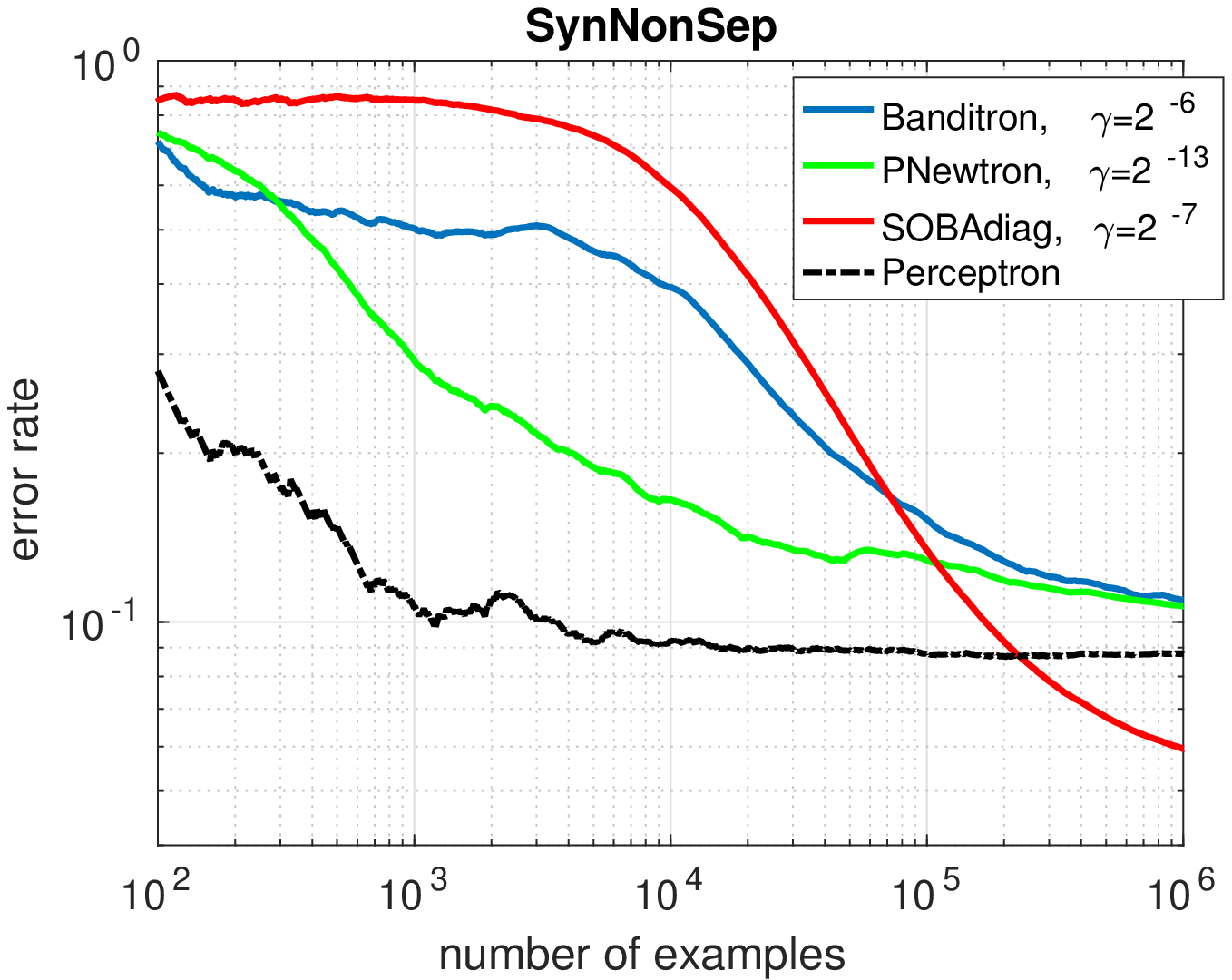} &
  \includegraphics[width=0.22\textwidth]{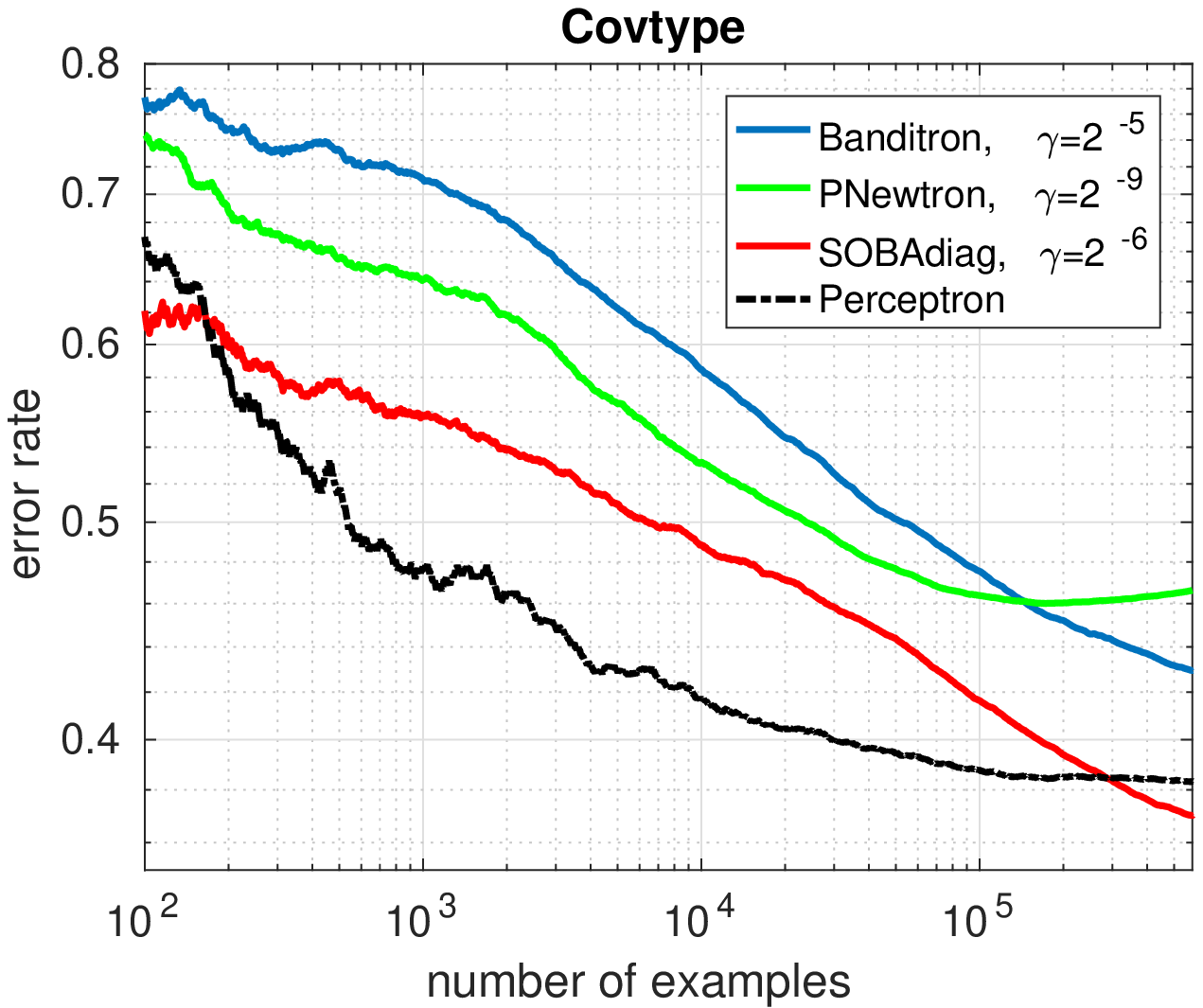} &
  \includegraphics[width=0.22\textwidth]{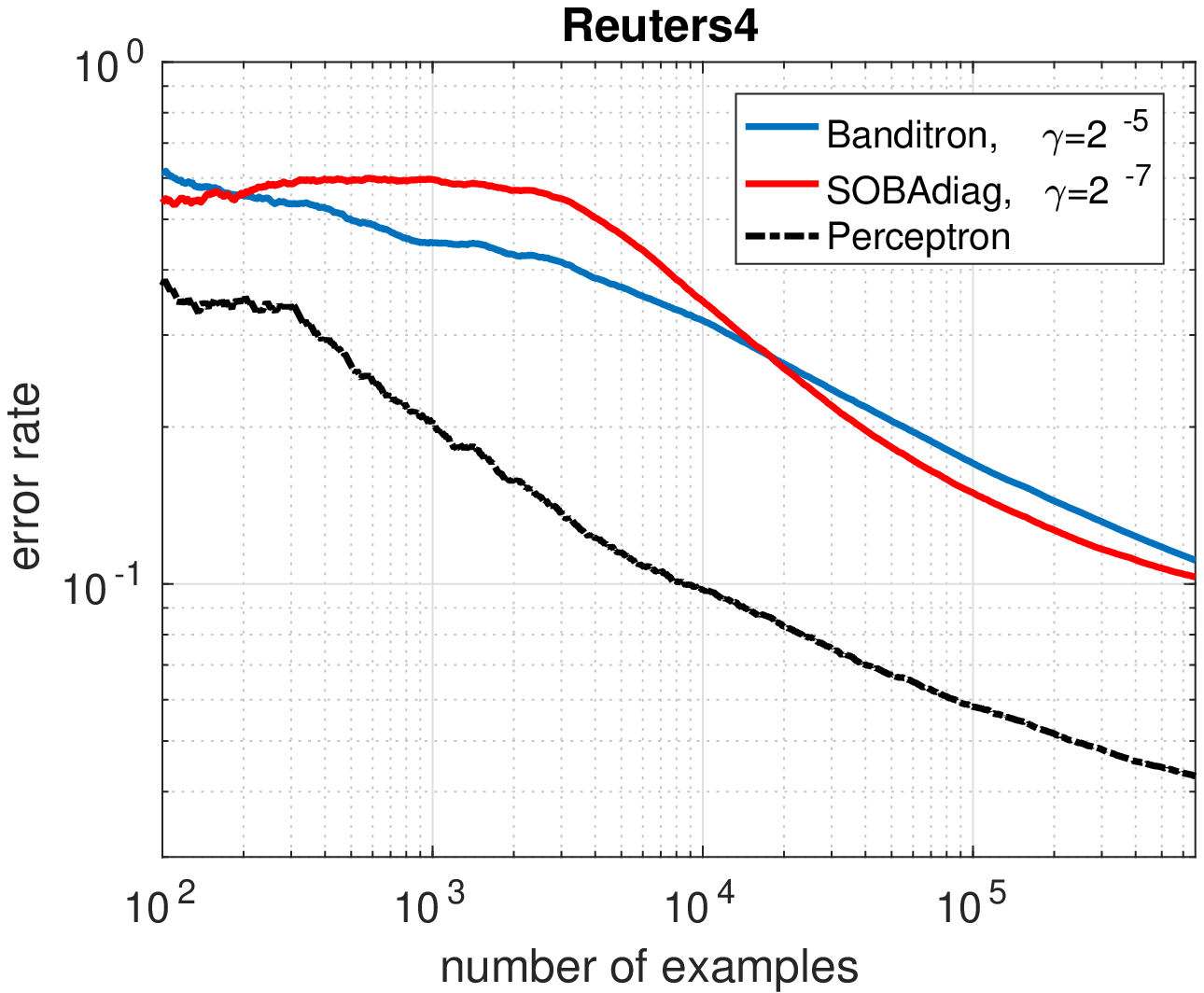}\\
  \end{tabular}
  \label{fig:exp1}
  \caption{Error rates vs. the value of the exploration rate $\gamma$ (top row) and vs. the number examples (bottom row). The x-axis is logarithmic in all the plots, while the y-axis is logarithmic in the plots in the second row. Figure best viewed in colors.}
\end{figure*}

We tested SOBA to empirically validate the theoretical findings.
We used three different datasets from \citet{Kakade-Shalev-Shwartz-Tewari-2008}: \texttt{SynSep, SynNonSep, Reuters4}. The first two are synthetic, with $10^6$ samples in $\R^{400}$ and 9 classes. \texttt{SynSep} is constructed to be linearly separable, while \texttt{SynNonSep} is the same dataset with 5\% random label noise. \texttt{Reuters4} is generated from the RCV1 dataset~\cite{Lewis04}, extracting the 665,265 examples that have exactly one label from the set \{\texttt{CCAT, ECAT, GCAT, MCAT}\}. It contains 47,236 features.
We also report the performance on \texttt{Covtype} from LibSVM repository.\footnote{\url{https://www.csie.ntu.edu.tw/~cjlin/libsvmtools/datasets/}}
We report averages over 10 different runs.

SOBA, as the Newtron algorithm, has a quadratic complexity in the dimension of the data, while the Banditron and the Perceptron algorithm are linear. Following the long tradition of similar algorithms~\citep{CrammerK09,DuchiHS11,HazanK11,CrammerG13}, to be able to run the algorithm on large datasets, we have implemented an approximated diagonal version of SOBA, named SOBAdiag. It keeps in memory just the diagonal of the matrix $A_t$.
Following \citet{HazanK11}, we have tested only algorithms designed to work in the fully adversarial setting. Hence, we tested the Banditron and the PNewtron, the diagonal version of the Newtron algorithm in \citet{HazanK11}. The multiclass Perceptron algorithm was used as a full-information baseline.

In the experiments, we only changed the exploration rate $\gamma$, leaving fixed all the other parameters the algorithms might have.
In particular, for the PNewtron we set $\alpha=10$, $\beta=0.01$, and $D=1$, as in \citet{HazanK11}. In SOBA, $a$ is fixed to 1 in all the experiments.
We explore the effect of the exploration rate $\gamma$ in the first row of Figure \ref{fig:exp1}.
We see that the PNewtron algorithm,\footnote{We were unable to make the PNewtron work on \texttt{Reuters4}. For any setting of $\gamma$ the error rate is never better than 57\%. The reason might be that the dataset RCV1 has 47,236 features, while the one reported in \citet{Kakade-Shalev-Shwartz-Tewari-2008, HazanK11} has 346,810, hence the optimal setting of the 3 other parameters of PNewtron might be different. For this reason we prefer not to report the performance of PNewtron on \texttt{Reuters4}.} thanks to the exploration based on the softmax prediction, can achieve very good performance for a wide range of $\gamma$. 

It is important to note that SOBAdiag has good performance on all four datasets for a value of $\gamma$ close to 1\%. For bigger values, the performance degrades because the best possible error rate is lower bounded by $\frac{k-1}{k}\gamma$ due to exploration. For smaller values of exploration, the performance degrades because the algorithm does not update enough. In fact, SOBA updates only when $\tilde{y}_t=y_t$, so when $\gamma$ is too small the algorithms does not explore enough and remains stuck around the initial solution. Also, SOBA requires an initial number of updates to accumulate enough negative terms in the $\sum_t n_t m_t$ in order to start updating also when $\hat y_t$ is correct but the margin is too small.

The optimal setting of $\gamma$ for each algorithm was then used to generate the plots in the second row of Figure~\ref{fig:exp1}, where we report the error rate over time. With the respective optimal setting of $\gamma$, we note that the performance of PNewtron does not seem better than the one of the Multiclass Perceptron algorithm, and on par or worse to the Banditron's one. On the other hand, SOBAdiag has the best performance among the bandits algorithms on 3 datasets out of 4.

The first dataset, \texttt{SynSep}, is separable and with their optimal setting of $\gamma$, all the algorithms converge with a rate of roughly $O(\frac{1}{T})$, as can be seen from the log-log plot, but the bandit algorithms will not converge to zero error rate, but to $\frac{k-1}{k}\gamma$. However, SOBA has an initial phase in which the error rate is high, due to the effect mentioned above.

On the second dataset, \texttt{SynNonSep}, SOBAdiag outperforms all the other algorithms (including the full-information Perceptron), achieving an error rate close to the noise level of 5\%. This is due to SOBA being a second-order algorithm, while the Perceptron is a first-order algorithm.
A similar situation is observed on \texttt{Covtype}.
On the last dataset, \texttt{Reuters4}, SOBAdiag achieves performance better than the Banditron.

\section{Discussion and Future Work}
\label{sec:discussion}
In this paper, we study the
problem of online multiclass learning with bandit feedback.
We propose SOBA, an algorithm that achieves a regret of
$\tilde O(\frac1\eta \sqrt{T})$ with respect to $\eta$-loss of the competitor.
This answers a COLT open problem posed by~\cite{AbernethyR09}.
Its key ideas are to apply a novel adaptive regularizer in a second order online learning algorithm, coupled with updates only when the predictions are correct. SOBA is shown to have competitive performance compared
to its precedents in synthetic and real datasets, in some cases even better than the full-information Perceptron algorithm.
There are several open questions we wish to explore:

1. Is it possible to design efficient algorithms with mistake bounds that depend on the loss of the competitor,
i.e. $\E[M_T] \leq L_\eta(\bm U) + \tilde O(\sqrt{ kd L_\eta(\bm U) } + kd)$? This
type of bound occurs naturally in the full information multiclass online learning setting,
(see e.g. Theorem~\ref{thm:perc_bound}),
or in multiarmed bandit setting, e.g.~\cite{Neu15}.

2. Are there efficient algorithms that have a finite mistake bound in
the separable case? ~\cite{Kakade-Shalev-Shwartz-Tewari-2008} provides an
algorithm that performs enumeration and plurality vote to achieve a finite mistake bound in the finite dimensional setting,
but unfortunately the algorithm is impractical.
Notice that it is easy to show that in SOBA $\hat{y}_t$ makes a logarithmic number of mistakes in the separable case, with a constant rate of exploration, yet it is not clear how to decrease the exploration over time in order to get a logarithmic number of mistakes for $\tilde{y}_t$.

\paragraph{Acknowledgments.} We thank Claudio Gentile for suggesting the original plan of attack for this problem, Satyen Kale for thought-provoking conversations (which result in Appendix~\ref{sec:expconcave}), and Haipeng Luo for helpful discussions on Newtron. We also thank the anonymous reviewers for thoughtful comments.

\newpage

\bibliography{learning}
\bibliographystyle{plainnat}

\appendix
\section{Adaptive Tuning of the Exploration Rate}
\label{sec:adaptive}

In Theorem~\ref{thm:soba_bound} we have presented a tuning of $\gamma$ that guarantees a regret of the order of $\tilde O(\frac{1}{\eta}\sqrt{T})$.
However, this setting requires to upper bound the sum of the quadratic terms with a worst case bound.
In this section, we develop an adaptive strategy for the tuning of the exploration rate $\gamma$ that guarantees an optimal bound w.r.t. to the tightest sum of the quadratic terms.

First, we make rate dependent of the time, i.e. $\gamma_t$.
Our aim is to choose $\gamma_t$ in each time step in order to minimize the
excess mistake bound
$\E\left[ \sum_{t=1}^T\gamma_t + \frac{1}{\eta(2-\eta)} \sum_{t=1}^T \frac{k}{\gamma_t} \bm z_t^T \bm A_t^{-1} \bm z_t \right]$.
The main result is that, adaptively setting $\gamma_t$'s would result in a bound
within (roughly) a constant factor of that obtained by the best fixed
$\gamma$ in hindsight. We start with a technical lemma.

\begin{lemma}
\label{lem:selfconfident}
Let $c_1, \ldots, c_T \in [0,b]$ be a sequence of real numbers, $a>0$, and define
$\gamma_t = \min\left(\sqrt{\tfrac{b+\sum_{s=1}^{t-1}c_s}{t}},1\right)$. We have,
\[
\sum_{t=1}^T \left(\gamma_t + a \frac{c_t}{\gamma_t}\right) \leq (2+2a) \sqrt{T} \sqrt{b+\sum_{t=1}^{T}c_t} + a \sum_{t=1}^T c_t~.
\]
\end{lemma}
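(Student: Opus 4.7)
Proof plan for Lemma~\ref{lem:selfconfident}:

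The natural approach is to introduce the shorthand $S_t := b + \sum_{s<t} c_s$, so that $\gamma_t = \min(\sqrt{S_t/t},\,1)$ and $S_{T+1} = b + \sum_{t=1}^T c_t$ is precisely the quantity appearing on the right-hand side. The strategy will be to split the sum as $\sum_t(\gamma_t + a c_t/\gamma_t) = \sum_t \gamma_t + a\sum_t c_t/\gamma_t$ and control each piece separately using two standard tools: the ``$\sqrt{S_t/t}$'' upper envelope for $\gamma_t$, and a telescoping argument for the reciprocal terms.

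For the first sum, since $\gamma_t \leq \sqrt{S_t/t}$ by definition, and $S_t$ is monotone non-decreasing with $S_t \leq S_{T+1}$, I would bound
\[
\sum_{t=1}^T \gamma_t \;\leq\; \sqrt{S_{T+1}}\sum_{t=1}^T \frac{1}{\sqrt{t}} \;\leq\; 2\sqrt{T\,S_{T+1}},
\]
using the elementary $\sum_{t=1}^T 1/\sqrt{t} \leq 2\sqrt{T}$. For the second sum, write $1/\gamma_t = \max(\sqrt{t/S_t},1) \leq 1 + \sqrt{t/S_t}$, which yields
\[
\sum_{t=1}^T \frac{c_t}{\gamma_t} \;\leq\; \sum_{t=1}^T c_t \;+\; \sqrt{T}\sum_{t=1}^T \frac{c_t}{\sqrt{S_t}},
\]
after also using $\sqrt{t} \leq \sqrt{T}$.

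The remaining and most delicate piece is to bound $\sum_t c_t/\sqrt{S_t}$ by a constant times $\sqrt{S_{T+1}}$. The key identity is $\sqrt{S_{t+1}} - \sqrt{S_t} = c_t/(\sqrt{S_{t+1}} + \sqrt{S_t}) \geq c_t/(2\sqrt{S_{t+1}})$, giving $c_t/\sqrt{S_{t+1}} \leq 2(\sqrt{S_{t+1}} - \sqrt{S_t})$. This is where the hypothesis $c_t \leq b$ enters crucially: combined with $S_t \geq b$ it gives $S_{t+1}/S_t \leq 2$, and hence $c_t/\sqrt{S_t} \leq \sqrt{2}\,c_t/\sqrt{S_{t+1}} \leq 2\sqrt{2}(\sqrt{S_{t+1}} - \sqrt{S_t})$. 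Summing telescopically then yields $\sum_t c_t/\sqrt{S_t} \leq 2\sqrt{2}\,\sqrt{S_{T+1}}$.

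Putting the pieces together gives
\[
\sum_{t=1}^T\left(\gamma_t + a\frac{c_t}{\gamma_t}\right) \;\leq\; \bigl(2 + 2\sqrt{2}\,a\bigr)\sqrt{T}\,\sqrt{b+\textstyle\sum_{t=1}^T c_t} \;+\; a\sum_{t=1}^T c_t,
\]
and absorbing constants produces the stated bound. The main obstacle is the telescoping step: a naive attempt to derive $c_t/\sqrt{S_t} \leq 2(\sqrt{S_{t+1}}-\sqrt{S_t})$ directly fails because $\sqrt{S_{t+1}} \geq \sqrt{S_t}$ goes the wrong way, so one genuinely needs the $c_t \leq b \leq S_t$ hypothesis to control the ratio $S_{t+1}/S_t$. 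A slightly sharper handling of this step (e.g., phase-by-phase doubling analysis) would be needed to shave the $\sqrt{2}$ down to $1$ and match the exact constant $(2+2a)$ stated.
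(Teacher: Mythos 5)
Your decomposition and both of the first two steps are exactly the paper's: bound $\sum_t \gamma_t$ by $2\sqrt{T}\sqrt{b+\sum_t c_t}$ via $\sum_t 1/\sqrt{t}\le 2\sqrt{T}$, and split $c_t/\gamma_t = \max(c_t\sqrt{t/S_t},\,c_t) \le c_t\sqrt{T/S_t} + c_t$. The only place you diverge is the telescoping step, and that is also the only place your argument falls short of the stated bound: you keep $S_t = b+\sum_{s<t}c_s$ in the denominator and pay a factor $\sqrt{2}$ to move from $\sqrt{S_t}$ to $\sqrt{S_{t+1}}$ via $S_{t+1}\le 2S_t$, ending with $(2+2\sqrt{2}a)$ rather than $(2+2a)$; ``absorbing constants'' does not close this, since the lemma asserts the sharper constant. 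The fix is to use the hypothesis $c_t\le b$ one step earlier, as the paper does: it gives $b+\sum_{s<t}c_s \ge \sum_{s\le t}c_s$, hence $c_t/\sqrt{S_t}\le c_t/\sqrt{\sum_{s\le t}c_s}$, and now the denominator is the \emph{larger} of the two consecutive cumulative sums, so the standard telescoping bound $\sum_t c_t/\sqrt{\sum_{s\le t}c_s}\le 2\sqrt{\sum_t c_t}$ (Lemma~3.5 of \citealp{AuerCG02}, itself the identity you wrote with the favorable denominator) applies directly with constant $2$ and no ratio argument. With that substitution your proof matches the paper's line for line.
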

\begin{proof}
First, note that
\begin{align*}
   \sum_{t=1}^T \gamma_t \leq \sum_{t=1}^T \sqrt{\frac{b+\sum_{s=1}^{t-1}c_s}{t}} \leq \sqrt{b+\sum_{s=1}^T c_s} \sum_{t=1}^T\sqrt{\frac{1}{t}}
   \leq 2\sqrt{T} \sqrt{b+\sum_{s=1}^T c_s}~.
\end{align*}
Second, using the elementary chain of inequalities $\max(a,b)\leq a+b, \forall a,b\geq0$, we have that
\begin{align*}
\sum_{t=1}^T \frac{c_t}{\gamma_t}
   &= \sum_{t=1}^T \max\left( \frac{c_t\sqrt{t}}{\sqrt{b+\sum_{s=1}^{t-1}c_s}}, c_t\right)\\
   &\leq \sum_{t=1}^T \sqrt{T} \frac {c_t}  {\sqrt{b+\sum_{s=1}^{t-1}c_s}} +  \sum_{t=1}^T c_t \\
   &\leq \sqrt{T} \sum_{t=1}^T \frac {c_t}  {\sqrt{\sum_{s=1}^{t}c_s}} + \sum_{t=1}^T c_t\\
   &\leq 2\sqrt{T} \sqrt{b+\sum_{s=1}^T c_s} + \sum_{t=1}^T c_t,
\end{align*}
where the last inequality uses Lemma 3.5 of~\cite{AuerCG02}.
Combining the two inequalities, we get the desired result.
\end{proof}

Built upon the lemma above, we show that, tailored to our setting, the adaptive
tuning would result in a bound within a constant factor of that achieved by the best fixed $\gamma$ in hindsight.
\begin{theorem}
Running SOBA with the adaptive setting of $\gamma_t = \min\left(\sqrt{\frac{k(1+ \sum_{s=1}^{t-1} \bm z_s^T \bm A_s^{-1} \bm z_s)}{t}},1\right)$ and $a = X^2$, we have that
\begin{align*}
  &\E[M]
  \leq L_\eta(U) +O\left(X^2 \| U \|_F^2+\frac{1}{\eta} (\sqrt{d k^2 T \ln T} + dk^2 \ln T) \right)~.
\end{align*}
\end{theorem}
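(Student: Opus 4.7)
My plan is to follow the proof template of Theorem~\ref{thm:soba_bound}, but (i) carry the time-varying $\gamma_t$ through the argument and then (ii) apply the self-confident tuning Lemma~\ref{lem:selfconfident} pathwise to handle the exploration-vs-regularization trade-off optimally.

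\textbf{Step 1: time-varying refinement of Theorem~\ref{thm:yhat}.} The only place the constant $\gamma$ enters the derivation of Theorem~\ref{thm:yhat} is the pointwise inequality $1/p_{t,y_t}\le k/\gamma$. Since the proposed $\gamma_t$ is $\mathcal{F}_{t-1}$-measurable (it depends only on $\bm z_s^\top \bm A_s^{-1}\bm z_s$ for $s<t$), the distribution $\bm p_t=(1-\gamma_t)\bm e_{\hat y_t}+(\gamma_t/k)\bm 1_k$ is well-defined and $1/p_{t,y_t}\le k/\gamma_t$ still holds conditionally on the history. Lemma~\ref{lem:nt} goes through verbatim because its proof only conditions on the natural filtration up through $(\bm x_t,y_t)$. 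Likewise, $\E[\one[\tilde y_t\ne\hat y_t]\mid\mathcal{F}_{t-1},\bm x_t,y_t]\le\gamma_t$. Repeating the steps that led to Theorem~\ref{thm:yhat} verbatim, and combining with the triangle-inequality bound $\E[M_T]\le\E[\hat M_T]+\E[\sum_t \gamma_t]$, yields, for every admissible $\eta$,
\begin{equation*}
\E[M_T]\le L_\eta(\bm U)+\frac{a\eta}{2-\eta}\|\bm U\|_F^2+\E\!\left[\sum_{t=1}^T\gamma_t+\frac{k}{\eta(2-\eta)}\sum_{t=1}^T\frac{n_t\,\bm z_t^\top\bm A_t^{-1}\bm z_t}{\gamma_t}\right].
\end{equation*}

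\textbf{Step 2: pathwise self-confident tuning.} Set $c_t:=k\,n_t\,\bm z_t^\top\bm A_t^{-1}\bm z_t$ (with the convention $\bm z_t=0$ when $n_t=0$, so the sum inside $\gamma_t$ equals $\sum_{s<t}c_s/k$) and $b:=k$. Since $\bm A_t\succeq n_t\,\bm z_t\bm z_t^\top$, we have $n_t\bm z_t^\top\bm A_t^{-1}\bm z_t\le 1$, so $c_t\in[0,b]$ as required. The tuning in the theorem statement is exactly the $\gamma_t$ prescribed by Lemma~\ref{lem:selfconfident} with this choice of $(c_t,b)$, so Lemma~\ref{lem:selfconfident} (with the ``$a$'' of that lemma set to $1/(\eta(2-\eta))$) gives, \emph{deterministically},
\begin{equation*}
\sum_{t=1}^T\gamma_t+\frac{k}{\eta(2-\eta)}\sum_{t=1}^T\frac{n_t\,\bm z_t^\top\bm A_t^{-1}\bm z_t}{\gamma_t}\le\Bigl(2+\tfrac{2}{\eta(2-\eta)}\Bigr)\sqrt{T}\sqrt{k+k\!\sum_{t=1}^T n_t\bm z_t^\top\bm A_t^{-1}\bm z_t}+\frac{k}{\eta(2-\eta)}\sum_{t=1}^T n_t\bm z_t^\top\bm A_t^{-1}\bm z_t.
\end{equation*}

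\textbf{Step 3: take expectations.} Apply Jensen's inequality to move the expectation inside the square root, then invoke Lemma~\ref{lem:adaptivenorms} with $a=X^2$ to bound $\sum_t\E[n_t\bm z_t^\top\bm A_t^{-1}\bm z_t]\le dk\ln T$. This turns the square-root term into $O(\sqrt{T}\cdot\sqrt{k(1+dk\ln T)})=O(k\sqrt{dT\ln T})$ and the linear tail into $O(dk^2\ln T)$, both divided by $\eta(2-\eta)$. Since $\eta\le 1$, the regularization term is at most $X^2\|\bm U\|_F^2$, and noting that $k\sqrt{dT\ln T}=\sqrt{dk^2 T\ln T}$ and $\eta(2-\eta)\ge\eta$, the displayed bound of the theorem follows.

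\textbf{Main obstacle.} The delicate point is that $\gamma_t$ is itself a random variable measurable with respect to past observations, so one cannot simply plug a deterministic $\gamma$ into the previous analysis. The cleanest way around this is precisely the pathwise application in Step~2: Lemma~\ref{lem:selfconfident} converts the random sum $\sum_t\gamma_t+\sum_t c_t/\gamma_t$ into a bound that depends only on $\sum_t c_t$ and $T$, after which the expectation can be taken and Jensen plus Lemma~\ref{lem:adaptivenorms} finish the job. One must also be careful to use the version of the $\gamma_t$ sum that genuinely matches Lemma~\ref{lem:selfconfident}, i.e., restricted to rounds with $n_s=1$; with the natural convention that $\bm z_s:=0$ when $n_s=0$, the theorem's formula for $\gamma_t$ coincides with the one Lemma~\ref{lem:selfconfident} requires.
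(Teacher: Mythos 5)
Your proposal is correct and follows essentially the same route as the paper's own proof: propagate the time-varying $\gamma_t$ through the Theorem~\ref{thm:yhat} analysis, combine with the triangle-inequality step $\E[M_T]\le\E[\hat M_T]+\E[\sum_t\gamma_t]$, apply Lemma~\ref{lem:selfconfident} pathwise with $c_t=k\,\bm z_t^\top\bm A_t^{-1}\bm z_t$ and $b=k$, and finish with Jensen and Lemma~\ref{lem:adaptivenorms}. Your added care about the $\mathcal{F}_{t-1}$-measurability of $\gamma_t$ and the $n_t$ convention (so that the quantity fed to Lemma~\ref{lem:selfconfident} matches the one bounded by Lemma~\ref{lem:adaptivenorms}) is a welcome tightening of bookkeeping the paper leaves implicit, but it is not a different argument.
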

\begin{proof}[Proof Sketch]
Following the same proof as Theorem~\ref{thm:yhat}, we get that
\[
\E\left[\hat{M}_T \right]
\leq L_\eta(\bm U)+\frac{a\eta \| \bm U \|_F^2}{2-\eta}
 + \frac{1}{\eta(2-\eta)} \E[\sum_{t=1}^T\frac{k}{\gamma_t}\bm z_t^T \bm A_t^{-1} \bm z_t]
\]
Meanwhile by triangle inequality,
 \begin{equation*}
   \E[M_T] \leq \E[\hat{M}_T] + \E \left[\sum_{t=1}^T \one[\tilde y_t \neq \hat y_t] \right] \leq \E[\hat{M}_T] + \E \left[\sum_{t=1}^T \gamma_t \right]~.
 \end{equation*}
Combining the two inequalities above, we get
\[ \E\left[M_T \right]
    \leq L_\eta(\bm U)+\frac{a\eta \| \bm U \|_F^2}{2-\eta}
 + \E\left[\frac{1}{\eta(2-\eta)} \sum_{t=1}^T\frac{k \, \bm z_t^T \bm A_t^{-1} \bm z_t}{\gamma_t} + \sum_{t=1}^T \gamma_t\right].\]
We take a closer look at the last term. Lemma~\ref{lem:selfconfident} with $c_t = k \bm z_t^T \bm A_t^{-1} \bm z_t \in [0,k]$, $b = k$, $a = \frac{1}{\eta(2-\eta)}$,
implies that
\begin{align*}
\sum_{t=1}^T &\gamma_t + \sum_{t=1}^T \frac{k}{\eta (2-\eta)\gamma_t} \bm z_t^T \bm A_t^{-1} \bm z_t\\
   &\leq \left(2+\frac{2}{\eta(2-\eta)}\right) \sqrt{T}\sqrt{k (1 + \sum_{t=1}^{T} \bm z_t^T \bm A_t^{-1} \bm z_t)} + \frac{1}{\eta(2-\eta)} \, k (1 + \sum_{t=1}^{T} \bm z_t^T \bm A_t^{-1} \bm z_t)~.
\end{align*}
Taking the expecation of both sides and using Lemma~\ref{lem:adaptivenorms},
we get that the last term on the right hand side is at most $\frac{12}{\eta} (\sqrt{d k^2 T \ln T} + dk^2 \ln T)$.
This completes the proof.
\end{proof}

\section{Deferred Proofs}
\label{sec:appendix1}

\begin{proof}[Proof of Theorem~\ref{thm:perc_bound}] \label{proof:thm:2}
Let $p\geq2$ such that $\tfrac{1}{p}+\tfrac{1}{q}=1$. Denote by $b_t$ the
indicator variable that multiclass Perceptron makes an update, i.e. makes a mistake.
We have:

\begin{align*}
\langle &\bm W_{T+1}, \bm U \rangle\\
&\leq \norm{\bm W_{T+1}}_F \norm{\bm U}_F \\
&= \norm{\bm U}_F \sqrt{\norm{\bm W_{T}}^2+2 b_t \langle \bm W_T, (\bm e_{y_T} - \bm e_{\hat{y}_T}) \otimes \bm x_T \rangle + 2b_t^2 \norm{\bm x_T}_2^2} \\
&\leq \norm{\bm U}_F \sqrt{\norm{\bm W_{T}}_F^2+2b_t^2 \norm{\bm x_T}_2^2} \\
&\leq \cdots \\
&\leq \norm{\bm U}_F \sqrt{2 \sum_{t=1}^{T} b_t^2 \norm{\bm x_t}_2^2} \\
&\leq \norm{\bm U}_F X \sqrt{2}\sqrt{\sum_{t=1}^{T} b_t^2} \\
&= \norm{\bm U}_F X \sqrt{2} \sqrt{\sum_{t=1}^{T} b_t}
\end{align*}
Also, we have, that
\begin{align*}
\langle \bm W_{T+1}, \bm U \rangle
&= \sum_{t=1}^T b_t \langle \bm U, (\bm e_{y_t}-\bm e_{\hat{y}_t}) \otimes \bm x_t \rangle \\
&= \sum_{t=1}^T b_t [ 1-(1- \langle \bm U, (\bm e_{y_t}-\bm e_{\hat{y}_t}) \otimes \bm x_t \rangle)] \\
&\geq \sum_{t=1}^T b_t [ 1-|1- \langle \bm U, (\bm e_{y_t}-\bm e_{\hat{y}_t}) \otimes \bm x_t \rangle|_+] \\
&\geq \sum_{t=1}^T b_t - \sum_{t=1}^T b_t \ell(\bm U, (\bm x_t, y_t)) \\
&\geq \sum_{t=1}^T b_t - (\sum_{t=1}^T b_t^p)^\frac{1}{p} (\sum_{t=1}^T \ell(\bm U, (\bm x_t, y_t))^q)^\frac{1}{q} \\
&= \sum_{t=1}^T b_t - (\sum_{t=1}^T b_t)^\frac{1}{p} (\sum_{t=1}^T \ell(\bm U, (\bm x_t, y_t))^q)^\frac{1}{q} \; .
\end{align*}
Putting all together we have
\begin{align*}
\norm{\bm U}_F X \sqrt{2} \sqrt{\sum_{t=1}^{T} b_t} \geq \sum_{t=1}^T b_t - \left(\sum_{t=1}^T b_t\right)^\frac{1}{p} L_{\text{MH},q}(\bm U)^\frac{1}{q}~.
\end{align*}
Noting that $\sum_{t=1}^{T} b_t$ is equal to number of mistake $M_T$, we get the stated bound.
\end{proof}

\begin{lemma}
Suppose we are given positive real numbers $L, T, H, U$ and function
 $F(\gamma) = \min(T, L + \gamma T + \frac{UH}{\gamma} + \sqrt{\frac{UHL}{\gamma}})$,
where $\gamma \in [0,1]$. Then:
\begin{enumerate}
\item If $L \leq (U+1)\sqrt{HT}$, then taking $\gamma^* = \min(\sqrt{\frac H T}, 1)$ gives that $F(\gamma^*) \leq L + 3(U+1)\sqrt{HT}$.
\item If $L > (U+1)\sqrt{HT}$, then taking $\gamma^* = \min((\frac {HL} {T^2})^{\frac 1 3}, 1)$ gives that $F(\gamma^*) \leq L + 2(\sqrt{U} + 1) (HLT)^{\frac13}$.
\end{enumerate}
\label{lem:fallbacktuning}
\end{lemma}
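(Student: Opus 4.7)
The plan is purely computational: substitute each prescribed $\gamma^*$ into $F$, bound the resulting summands separately, and handle the saturation at $\gamma^*=1$ via the trivial inequality $F(\gamma)\leq T$ that is built into the definition. The hypothesis on $L$ is used exactly once in each case to convert the $\sqrt{UHL/\gamma^*}$ term (and, in case 2, also the $UH/\gamma^*$ term) into a multiple of the target rate.

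For case 1 I would take the interior choice $\gamma^*=\sqrt{H/T}$, which makes the three non-constant summands factor cleanly into $\gamma^* T=\sqrt{HT}$, $UH/\gamma^*=U\sqrt{HT}$, and $\sqrt{UHL/\gamma^*}=\sqrt{UL}\,(HT)^{1/4}$. The hypothesis $L\leq (U+1)\sqrt{HT}$ turns the last of these into $\sqrt{U(U+1)}\,\sqrt{HT}\leq (U+1)\sqrt{HT}$, and summing yields $F(\gamma^*)\leq L+2(U+1)\sqrt{HT}$, which beats the stated $3(U+1)\sqrt{HT}$. The boundary case $H\geq T$ forces $\gamma^*=1$, where $F(\gamma^*)\leq T\leq \sqrt{HT}\leq 3(U+1)\sqrt{HT}$ closes the argument immediately.

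For case 2 I would take the interior choice $\gamma^*=(HL/T^2)^{1/3}$. Two of the three non-constant terms are straightforward: $\gamma^* T=(HLT)^{1/3}$ and $\sqrt{UHL/\gamma^*}=\sqrt{U}\,(HLT)^{1/3}$. The delicate term is $UH/\gamma^*=U\,H^{2/3}T^{2/3}/L^{1/3}$, which I would factor as $U\,(HT/L^2)^{1/3}\cdot (HLT)^{1/3}$. The hypothesis $L>(U+1)\sqrt{HT}$ gives $(HT/L^2)^{1/3}<(U+1)^{-2/3}$, leaving a coefficient $U/(U+1)^{2/3}$, which is $\leq\sqrt{U}$ (equivalent to the elementary inequality $U^{3}\leq (U+1)^{4}$). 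Summing the three pieces gives $F(\gamma^*)\leq L+(1+2\sqrt{U})(HLT)^{1/3}$, which implies the stated bound. The boundary case $HL\geq T^2$ forces $\gamma^*=1$ and $(HLT)^{1/3}\geq T\geq F(\gamma^*)$, again finishing the claim.

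The only step that needs care is the bookkeeping in case 2: the factorization of $UH/\gamma^*$ into $(HLT)^{1/3}$ times a dimensionless ratio controlled by the hypothesis has to be set up so that the fractional exponents line up and the right power of $U+1$ emerges to yield the $\sqrt{U}$ coefficient. Everything else reduces to direct arithmetic on fractional exponents and one application of $\sqrt{ab}\leq\tfrac12(a+b)$-type comparisons.
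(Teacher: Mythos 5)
Your proposal is correct and follows essentially the same route as the paper's proof: substitute the prescribed $\gamma^*$, bound each term, and dispatch the saturated case $\gamma^*=1$ via $F\leq T$. The only differences are cosmetic --- in case 1 you bound $\sqrt{UL}\,(HT)^{1/4}$ directly from the hypothesis on $L$ (yielding the slightly sharper constant $2(U+1)$) where the paper first applies AM--GM, and in case 2 you obtain the coefficient $U/(U+1)^{2/3}\leq\sqrt{U}$ where the paper uses $U^{1/3}\leq\sqrt{U}+1$; both are valid.
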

\begin{proof}
We prove the two cases separately.
\begin{enumerate}
\item If $T \leq H$, then $\gamma^* = 1$, $F(\gamma^*) \leq T \leq  L + 3(U+1)\sqrt{HT}$.

Otherwise, $T > H$. In this case, $\gamma^* = \sqrt{\frac H T}$. We have that
\begin{eqnarray*}
&&F(\gamma^*) \\
&=& L + \gamma^* T + \frac{UH}{\gamma^*} + \sqrt{\frac{UHL} {\gamma^*}} \\
&=& L + \sqrt{HT} + U\sqrt{HT} + \sqrt{UL \sqrt{HT}} \\
&\leq& L + (U + 1) \sqrt{HT} + L + U \sqrt{HT} \\
&\leq& L + 3(U + 1) \sqrt{HT}.
\end{eqnarray*}
where the first inequality is from that arithmetic mean-geometric mean inequality, the second inequality is
by the assumption on $L$.

\item If $HL > T^2$, then $\gamma^* = 1$, $F(\gamma^*) \leq T \leq (HLT)^{\frac 1 3}$.

Otherwise, $HL \leq T^2$. In this case, $\gamma^* = (\frac {HL} {T^2})^{\frac 1 3}$. We have that
\begin{eqnarray*}
F(\gamma^*)
&=& L + \gamma^* T + \frac{UH}{\gamma^*} + \sqrt{\frac{UHL} \gamma^*} \\
&=& L + (HLT)^{\frac 1 3} + UH^{\frac 2 3} T^{\frac 2 3} L^{-\frac 1 3} + \sqrt{U} (HLT)^{\frac 1 3} \\
&\leq& L + (\sqrt{U} + U^{\frac 1 3} + 1) (HLT)^{\frac 1 3} \\
&\leq& L + 2(\sqrt{U} + 1) (HLT)^{\frac 1 3}.
\end{eqnarray*}
where the first inequality is from algebra and the condition on $L$, implying
$UH^{\frac 2 3} T^{\frac 2 3} L^{-\frac 1 3} \leq (HLT)^{\frac 1 3} U (\frac{HT}{L^2})^{\frac13} \leq U^{\frac13} (HLT)^{\frac13}$,
the second inequality is from that $U^{\frac 1 3} \leq \sqrt{U} + 1$.
\end{enumerate}
\end{proof}

\section{Per-Step Analysis of Online Least Squares}
For completeness, we present a technical lemma in online least squares, which has appeared in~\citep[e.g.,][]{Orabona-Cesa-Bianchi-Gentile-2012}.
\begin{lemma}
Suppose $\bm z_t$'s are vectors, and $\alpha_t$'s are scalars. For all $t \geq 1$, define
$\bm A_t = \sum_{s=1}^t \bm z_s \bm z_s^T$, $\bm w_t = -\bm A_{t-1}^{-1} \sum_{s=1}^{t-1} \alpha_s \bm z_s$. Then
for any vector $\bm u$, we have:
\begin{equation*}
  \frac12(\inn{\bm w_t}{\bm z_t} + \alpha_t)^2 (1 - \bm z_t^T \bm A_t^{-1} \bm z_t)
  - \frac12(\inn{\bm u}{\bm z_t} + \alpha_t)^2 \leq
  \frac 1 2 \| \bm u - \bm w_t \|_{\bm A_{t-1}}^2 - \frac 1 2 \| \bm u - \bm w_{t+1} \|_{\bm A_t}^2~.
\end{equation*}
\label{lem:perstep}
\end{lemma}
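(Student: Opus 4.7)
The plan is to prove this by a direct calculation, using the explicit form of the FTL minimizer $\bm w_{t+1}$ and expanding the potential $\tfrac12\|\bm u-\bm w_{t+1}\|_{\bm A_t}^2$. In fact the stated inequality will come out as an equality.

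First I would observe that $\bm w_{t+1} = -\bm A_t^{-1}\sum_{s=1}^t \alpha_s \bm z_s$ satisfies the normal equation $\bm A_t \bm w_{t+1} = -\sum_{s=1}^t \alpha_s \bm z_s$, and similarly $\bm A_{t-1}\bm w_t = -\sum_{s=1}^{t-1}\alpha_s\bm z_s$. Subtracting and using $\bm A_t = \bm A_{t-1} + \bm z_t \bm z_t^T$ gives the one-step update
\[
\bm w_{t+1} = \bm w_t - c_t\,\bm A_t^{-1}\bm z_t, \qquad c_t := \inn{\bm w_t}{\bm z_t}+\alpha_t.
\]
This is the key identity that makes the rest of the argument mechanical.

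Next I would plug this update into $\|\bm u-\bm w_{t+1}\|_{\bm A_t}^2$ and expand. Writing $\bm u-\bm w_{t+1} = (\bm u-\bm w_t) + c_t\bm A_t^{-1}\bm z_t$,
\[
\|\bm u-\bm w_{t+1}\|_{\bm A_t}^2 = \|\bm u-\bm w_t\|_{\bm A_t}^2 + 2c_t\,\inn{\bm z_t}{\bm u-\bm w_t} + c_t^2\,\bm z_t^T\bm A_t^{-1}\bm z_t,
\]
where the cross term simplifies because $\bm A_t \cdot \bm A_t^{-1}\bm z_t = \bm z_t$. Then I would split $\|\bm u-\bm w_t\|_{\bm A_t}^2 = \|\bm u-\bm w_t\|_{\bm A_{t-1}}^2 + \inn{\bm u-\bm w_t}{\bm z_t}^2$ using the rank-one identity $\bm A_t = \bm A_{t-1} + \bm z_t\bm z_t^T$. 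Introducing the shorthand $a := \inn{\bm u}{\bm z_t}+\alpha_t$ and $b := c_t = \inn{\bm w_t}{\bm z_t}+\alpha_t$, so that $\inn{\bm u - \bm w_t}{\bm z_t} = a-b$, everything collapses to
\[
\|\bm u-\bm w_{t+1}\|_{\bm A_t}^2 = \|\bm u-\bm w_t\|_{\bm A_{t-1}}^2 + (a-b)^2 + 2b(a-b) + b^2\,\bm z_t^T\bm A_t^{-1}\bm z_t.
\]
The algebraic miracle is $(a-b)^2 + 2b(a-b) = a^2 - b^2$, so the right-hand side equals $\|\bm u-\bm w_t\|_{\bm A_{t-1}}^2 + a^2 - b^2(1 - \bm z_t^T\bm A_t^{-1}\bm z_t)$. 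Rearranging and dividing by two yields exactly the desired bound as an equality.

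I do not anticipate a real obstacle here; the only step to handle with care is the cross-term bookkeeping in the expansion, and in particular noticing that the telescoping identity $(a-b)^2+2b(a-b)=a^2-b^2$ is what produces the $a^2$ and the correct coefficient $k_t = 1 - \bm z_t^T\bm A_t^{-1}\bm z_t$ in front of $b^2$. (If one prefers, the same result can be obtained by invoking the Bregman-divergence identity for the quadratic $L_t(\bm w)=\tfrac12\sum_{s\le t}(\inn{\bm w}{\bm z_s}+\alpha_s)^2$, namely $L_t(\bm u)-L_t(\bm w_{t+1}) = \tfrac12\|\bm u-\bm w_{t+1}\|_{\bm A_t}^2$, combined with a short computation of $L_t(\bm w_{t+1})-L_{t-1}(\bm w_t)$ via Sherman--Morrison; the direct expansion above is shorter.)
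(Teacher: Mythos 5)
Your proposal is correct and follows essentially the same route as the paper: both derive the one-step recurrence $\bm A_t \bm w_{t+1} = \bm A_t \bm w_t - (\inn{\bm w_t}{\bm z_t}+\alpha_t)\bm z_t$ and then perform the standard quadratic expansion of $\|\bm u - \bm w_{t+1}\|_{\bm A_t}^2$ followed by the rank-one split $\bm A_t = \bm A_{t-1} + \bm z_t\bm z_t^T$ (the paper packages this as the ``standard online mirror descent analysis,'' which is exactly your direct computation). Your only addition is the observation that the bound holds with equality, which is consistent with the paper's derivation since its intermediate $\leq$ steps are in fact equalities.
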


\begin{proof}
Observe that $\bm w_t$'s have the following recurrence:
\begin{eqnarray*}
  \bm w_{t+1} = \bm A_t^{-1} (\bm A_{t-1} \bm w_t - \alpha_t \bm z_t)
\end{eqnarray*}
Since $\bm A_t = \bm A_{t-1} + \bm z_t \bm z_t^T$, we have
\[  \bm A_t \bm w_{t+1} = \bm A_t \bm w_t - (\bm w_t^T \bm z_t + \alpha_t)\bm z_t \]
Now, by standard online mirror descent analysis \citep[See e.g.][proof of Theorem 11.1]{Cesa-Bianchi-Lugosi-2006}, we have
\begin{eqnarray*}
  \inn{\bm w_t - \bm u}{(\bm w_t^T \bm z_t + \alpha_t) \bm z_t} &\leq& \frac 1 2 \| \bm u - \bm w_t \|_{\bm A_t}^2 - \frac 1 2 \| \bm u - \bm w_{t+1} \|_{\bm A_t}^2 + \frac 1 2 (\bm w_t^T \bm z_t + \alpha_t)^2 \bm z_t^T \bm A_t^{-1} \bm z_t \\
  &\leq& \frac 1 2 \| \bm u - \bm w_t \|_{\bm A_{t-1}}^2 - \frac 1 2 \| \bm u - \bm w_{t+1} \|_{\bm A_t}^2 + \frac 1 2 (\bm w_t^T \bm z_t + \alpha_t)^2 \bm z_t^T \bm A_t^{-1} \bm z_t + \frac 1 2 (\bm u^T \bm z_t - \bm w_t^T \bm z_t)^2
\end{eqnarray*}
Now, moving the last term on the RHS to the LHS, we get
\[ (\bm w_t^T \bm z_t- \bm u^T \bm z_t) \cdot \frac 1 2 (\bm w_t^T \bm z_t + \bm u^T \bm z_t + 2\alpha_t) \leq \frac 1 2 \| \bm u - \bm w_t \|_{\bm A_{t-1}}^2 - \frac 1 2 \| \bm u - \bm w_{t+1} \|_{\bm A_t}^2 + \frac 1 2 (\bm w_t^T \bm z_t + \alpha_t)^2 \bm z_t^T \bm A_t^{-1} \bm z_t\]
i.e.
\begin{equation*}
  \frac12(\inn{\bm w_t}{\bm z_t} + \alpha_t)^2
  - \frac12(\inn{\bm u}{\bm z_t} + \alpha_t)^2 \leq
  \frac 1 2 \| \bm u - \bm w_t \|_{\bm A_{t-1}}^2 - \frac 1 2 \| \bm u - \bm w_{t+1} \|_{\bm A_t}^2+ \frac 1 2 (\bm w_t^T \bm z_t + \alpha_t)^2 \bm z_t^T \bm A_t^{-1} \bm z_t ~.
\end{equation*}
Now moving the last term on the RHS to the LHS, the lemma follows.
\end{proof}

\section{Additional Discussions of Newtron~\citep{HazanK11}}
\label{sec:newtron}
We show in this section that the Newtron algorithm~\citep{HazanK11}
 can also be interpreted as
one that achieves a $\tilde{O}(\sqrt{T})$ regret, in the sense that
for a certain set of convex loss functions that upper bounds the 0-1 loss (defined below),
the difference between
the cumulative 0-1 loss of the algorithm and the cumumlative convex loss
of the best linear predictor is at most $\tilde{O}(\sqrt{T})$.
\begin{theorem}[0-1 loss upper bound of Newtron~\citep{HazanK11}]
Define the $\alpha$-logistic loss as $\ell_{\logistic, \alpha}(\bm W, (\bm x,y)) :=
\log_2 (1 + \sum_{j \neq y} \exp( \alpha ((\bm W \bm x)_j - (\bm W \bm x)_y)) )$. Suppose we are given a sequence
of examples $(\bm x_1,y_1) \ldots, (\bm x_n, y_n)$ such that for all $t$, $\| \bm x_t \| \leq X$. Then, with appropriate tuning of its parameters,
Newtron has the following regret bound for all $\bm U \in \R^{k \times d}$ such that $\| \bm U \|_F \leq D$:
\[ \E[M_T] - \sum_{t=1}^T \E[\ell_{\logistic, \alpha}(\bm U, (\bm x_t,y_t))] \leq \min\left\{c \exp(4\alpha XD) \ln T, 6 c XDT^{2/3}\right\}, \]
where $c = O(k^3 n)$ is a constant independent of $\alpha$.
\end{theorem}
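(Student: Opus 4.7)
The plan is to reduce the theorem to two separate regret bounds for Newtron that are proved in~\citet{HazanK11}: one exploiting the exp-concavity of the $\alpha$-logistic loss (giving the logarithmic term) and one arising from a more conservative tuning that yields the $T^{2/3}$ term. The bridge between Newtron's existing guarantees---which are stated with respect to $\ell_{\logistic,\alpha}$ evaluated at the algorithm's deterministic iterate $\bm W_t$---and the statement I want---which concerns the $0$-$1$ loss of the randomized prediction $\tilde y_t$---is a one-line estimate that rests on the precise form of the softmax distribution used by Newtron.

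The starting observation is that Newtron predicts $\tilde y_t$ from the softmax distribution $p_{t,i} \propto \exp(\alpha(\bm W_t \bm x_t)_i)$, so that
\[
-\log_2 p_{t,y_t} = \log_2\!\left(1 + \sum_{j \neq y_t} e^{\alpha((\bm W_t \bm x_t)_j - (\bm W_t \bm x_t)_{y_t})}\right) = \ell_{\logistic,\alpha}(\bm W_t, (\bm x_t, y_t)).
\]
Combining this identity with the elementary inequality $1 - p \leq -\ln p \leq -\log_2 p$, valid for all $p \in (0,1]$, and summing over $t$ yields the key reduction
\[
\E[M_T] = \sum_{t=1}^T \E[\,1 - p_{t,y_t}\,] \leq \E\!\left[\sum_{t=1}^T \ell_{\logistic,\alpha}(\bm W_t, (\bm x_t, y_t))\right],
\]
converting the left-hand side of the theorem into the $\alpha$-logistic regret of the algorithm's iterates.

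I would then invoke the two upper bounds that~\citet{HazanK11} prove for this $\alpha$-logistic regret. In the exp-concavity regime, one uses that $\ell_{\logistic,\alpha}$ is $\beta$-exp-concave with $\beta = \Theta(e^{-4\alpha XD})$ on the Frobenius ball of radius $D$, so the Online Newton Step template yields an $O(\beta^{-1} kd \log T)$ regret; after absorbing the $k$- and $d$-dependence into $c = O(k^3 n)$, this becomes $c\,e^{4\alpha XD} \ln T$. In the worst-case regime, one sacrifices the exp-concavity-based analysis in favor of a more conservative step-size tuning that trades the logarithmic rate for robustness against large $\alpha$, which delivers a $6cXDT^{2/3}$ bound. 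Taking the minimum of the two bounds and chaining with the reduction above gives the stated inequality.

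The main obstacle is bookkeeping rather than new analysis: one must verify that the Newtron algorithm in~\citet{HazanK11} uses the \emph{pure} softmax for its sampling distribution, so that the identity $-\log_2 p_{t,y_t} = \ell_{\logistic,\alpha}(\bm W_t, (\bm x_t, y_t))$ holds exactly. If the algorithm instead uses a smoothed softmax of the form $(1 - k\epsilon)\cdot\mathrm{softmax} + \epsilon\,\bm{1}$, as some variants do in order to maintain exp-concavity near the boundary of the simplex, then the reduction picks up an additional $O(\epsilon k T)$ term that must be absorbed by choosing $\epsilon$ polynomially small in $T$; confirming that~\citet{HazanK11}'s tuning already accomplishes this is the subtlest step. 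A secondary concern is tracking the $k$- and $d$-dependence of the constant $c$ through the Newton-Step analysis to confirm the $O(k^3 n)$ scaling, which is routine but tedious.
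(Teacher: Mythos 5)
Your proposal is correct and follows essentially the same route as the paper: the paper's own proof likewise cites Corollary 5 of \citet{HazanK11} (noting only the normalization that the $\alpha$-log loss defined there equals $\frac{\ln 2}{\alpha}\cdot\ell_{\logistic,\alpha}$) and then concludes in one line from the fact that $\ell_{\logistic,\alpha}$ upper-bounds the 0-1 loss. Your softmax identity together with $1-p \leq -\log_2 p$ simply makes explicit the bridge from the iterates' logistic regret to $\E[M_T]$ that the paper leaves implicit.
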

\begin{proof}
Using Corollary 5 of~\citep{HazanK11}, and observe that the $\alpha$-log loss defined therein is equal to $\frac{\ln 2 }{\alpha} \cdot \ell_{\logistic, \alpha}(\cdot)$, we have that
\[ \sum_{t=1}^T \E[\ell_{\logistic, \alpha}(\bm W_t, (\bm x_t,y_t))] - \sum_{t=1}^T \ell_{\logistic, \alpha}(\bm U, (\bm x_t,y_t)) \leq \min\{c \exp(4\alpha XD) \ln T, 6 c XDT^{2/3}\}. \]
for some constant $c = O(k^3 n)$.
The theorem follows from the fact that $\ell_{\logistic, \alpha}$ is an upper bound of the 0-1 loss.
\end{proof}

Specifically, if $\alpha \leq \frac{\ln T}{8XD}$, then with appropriate tuning of its parameters, Newtron has a $\tilde{O}(\sqrt{T})$ regret bound against the $\alpha$-logistic loss of the best linear classifier. However, we show in the lemma below that the $\alpha$-logistic loss for this range of $\alpha$ has the undesirable property that the loss on every example is at least $\tilde{\Omega}(T^{- \frac 1 4})$. In sharp contrast,  both the multiclass hinge loss (used by Banditron) and the $\eta$-loss (used by SOBA) has the property that if the data is separable by a margin of 1, then the loss is zero.
For instance, in the realizable setting, the $\alpha$-logistic loss of the best linear classifier is at least $\Omega(T^{\frac 3 4})$, implying that the 0-1 loss of Newtron can only be (loosely) bounded by $\Omega(T^{\frac 3 4})$. In this case, the mistake bound is worse than that given by Banditron or SOBA, which are both $\tilde O(\sqrt{T})$. 

\begin{lemma}
Suppose the loss parameter $\alpha$ is at most $\frac{\ln T}{8XD}$.
If we are given a linear classifier $\bm U \in \R^{k \times d}$ such that $\| \bm U \|_F \leq D$ and an example $(\bm x, y)$ such that $\| \bm x \| \leq X$, then the $\alpha$-logistic loss of $\bm U$ on $(\bm x,y)$, $\ell_{\logistic, \alpha}(\bm U, (\bm x,y))$, is $\Omega(T^{-\frac 1 4})$.
\end{lemma}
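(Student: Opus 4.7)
The proof should be a short direct estimate: lower bound a single summand in the log-sum-exp, then use the standard linearization of $\log_2(1+x)$ near zero.

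The plan is to exploit the fact that the allowed range of $\alpha$ is small enough that $\alpha |(\bm U \bm x)_j - (\bm U \bm x)_y|$ is at most a constant multiple of $\ln T$, so that even the most negative exponent in the sum defining $\ell_{\logistic,\alpha}$ is bounded below by a small polynomial in $T$.

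First, I would bound the pairwise margin. Since the rows $\bm u_1,\ldots,\bm u_k$ of $\bm U$ satisfy $\|\bm u_j\|\leq \|\bm U\|_F\leq D$, Cauchy--Schwarz gives
\[
|(\bm U \bm x)_j - (\bm U \bm x)_y| \;=\; |\inn{\bm u_j-\bm u_y}{\bm x}| \;\leq\; (\|\bm u_j\|+\|\bm u_y\|)\,\|\bm x\| \;\leq\; 2XD
\]
for every $j\neq y$. Combined with $\alpha\leq \tfrac{\ln T}{8XD}$ this yields
\[
\alpha\bigl((\bm U \bm x)_j - (\bm U \bm x)_y\bigr) \;\geq\; -2\alpha XD \;\geq\; -\tfrac{1}{4}\ln T,
\]
so $\exp\bigl(\alpha((\bm U \bm x)_j - (\bm U \bm x)_y)\bigr)\geq T^{-1/4}$ for each $j\neq y$.

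Second, I would pick any single $j\neq y$ (which exists since $k\geq 2$) and drop the remaining nonnegative terms in the sum, obtaining
\[
1+\sum_{j\neq y}\exp\bigl(\alpha((\bm U \bm x)_j - (\bm U \bm x)_y)\bigr) \;\geq\; 1+T^{-1/4}.
\]
Applying $\log_2(1+x)\geq \tfrac{x}{2\ln 2}$ for $x\in[0,1]$ gives
\[
\ell_{\logistic,\alpha}(\bm U,(\bm x,y)) \;\geq\; \log_2\!\bigl(1+T^{-1/4}\bigr) \;\geq\; \frac{T^{-1/4}}{2\ln 2} \;=\; \Omega(T^{-1/4}),
\]
which is the claimed bound.

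There is no real obstacle here; the only care required is to note that rows of $\bm U$ inherit the Frobenius norm bound (so one need not separately assume a row-norm bound), and to keep track of the constant in the $2XD$ factor so that the $\tfrac{1}{8XD}$ threshold on $\alpha$ yields precisely the exponent $-\tfrac14\ln T$. All other steps are elementary and the constants can be worked out in passing.
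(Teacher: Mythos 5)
Your proof is correct and follows essentially the same route as the paper: bound the pairwise margin by $2XD$ via Cauchy--Schwarz, use the restriction on $\alpha$ to lower bound each exponential by $T^{-1/4}$, and conclude via $\log_2(1+x)=\Omega(x)$ for small $x$. The only cosmetic difference is that you keep a single summand while the paper keeps all $k-1$ of them, which changes nothing in the final $\Omega(T^{-1/4})$ bound.
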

\begin{proof}
  Observe that for $\bm U$ and $(\bm x,y)$, we have that for all $j \neq y$,
  \[ | \alpha ((\bm U \bm x)_j - (\bm U \bm x)_y) | \leq \frac {2 X D \ln T}{8 X D} = \frac {\ln T} 4 \]
  This implies that
  $\ell_{\logistic, \alpha}(\bm U, (\bm x,y))  = \log_2(1 + \sum_{j \neq y} \exp( \alpha ((\bm U \bm x)_j - (\bm U \bm x)_y)) ) \geq \log_2 (1 + (k-1) T^{-\frac 1 4}) = \Omega(T^{-\frac 1 4})$.
\end{proof}

\section{Connections to Online Exp-concave Optimization}
\label{sec:expconcave}

In this section, we present a (non-adaptive) variant of SOBA, namely Algorithm~\ref{alg:soba-mod}. Recall that in the original SOBA algorithm, we implicitly reduce the online classification problem to online least squares regression (Lemma~\ref{lem:vaw}), a problem well-studied in the literature~\citep{vovk2001competitive, azoury2001relative}. In contrast, Algorithm~\ref{alg:soba-mod} uses a black-box reduction to online exp-concave optimization, a generalization of the online least squares problem~\citep{Hazan-Agarwal-Kale-2007}.
Compared to SOBA, Algorithm~\ref{alg:soba-mod} has the advantage that it is conceptually much simpler, i.e.
it does not need to know the details of the underlying online optimization process.
However, it has two crucial drawbacks:
\begin{enumerate}
\item The adaptivtity of the algorithm is compromised. Existing exp-concave optimization oracles (defined below) need to know a bound on the competitor norm in advance~\citep[See e.g.][Lemma 3]{Hazan-Agarwal-Kale-2007}; in contrast, SOBA does not require such knowledge.
Moreover, the regret bound of Algorithm~\ref{alg:soba-mod} only holds for one $\eta$ chosen apriori (see Theorem~\ref{thm:soba-mod-reg} below); in contrast, the regret bound of SOBA holds for a range of $\eta$ simulateously, and the algorithm does not require the knowledge of $\eta$.
\item It is unclear how to incorporate the idea of ``passive-aggressive'' updates into the algorithm, which can substantially affect the algorithm's empirical performance.
\end{enumerate}
The results in this section is inspired by thought-provoking conversations with Satyen Kale~\citep{K17}.

\begin{algorithm}
\caption{Bandit Multiclass Classification via Reduction to Online Exp-concave Optimization}
\label{alg:soba-mod}
\begin{algorithmic}[1]
\REQUIRE Exploration parameter $\gamma \in [0,1]$, loss parameter $\eta$, online exp-concave optimzation oracle $\calO$.
\FOR{$t=1,2,\ldots,T$}
\STATE Receive instance $\bm x_t \in \R^d$
\STATE Receive $\bm W_t$ from optimization oracle $\calO$
\STATE $\hat{y}_t = \arg\max_{i \in [k]} (\bm W_t \bm x_t)_i$
\STATE Define $\bm p_t = (1-\gamma) \bm e_{\hat{y}_t} + \frac \gamma k \one_k$
\STATE Randomly sample $\tilde{y}_t$ according to $\bm p_t$
\STATE Receive bandit feedback $\one[\tilde{y}_t \not= y_t]$
\IF{$\tilde{y}_t = y_t$}
  \STATE Send loss function $\tilde{\ell}_t(\bm W) := \frac{1}{p_{t,y_t}} \cdot \one[\hat{y}_t \neq y_t] \cdot l_\eta(\bm W, (\bm x_t, y_t)) $ to optimization oracle $\calO$, where
  \[ l_\eta(\bm W, (\bm x_t, y_t)) := \left(1 - \frac{2}{2-\eta}((\bm W \bm x_t)_{y_t} - \max_{y \neq y_t} (\bm W \bm x_t)_t) + \frac{\eta}{2-\eta}((\bm W \bm x)_{y_t} - \max_{y \neq y_t} (\bm W \bm x)_t)^2 \right) \]
\ELSE
  \STATE Send loss function $\tilde{\ell}_t(\bm W) := 0$ to optimization oracle $\calO$
\ENDIF
\ENDFOR
\end{algorithmic}
\end{algorithm}

Specifically, Algorithm~\ref{alg:soba-mod} assumes access to an online exp-concave optimization oracle $\calO$, such that at each round $t$, it outputs a vector $\bm W_t \in \R^{k d}$, then receives a new loss function $\tilde{\ell}_t(\bm W)$ and updates its internal state. We require that $\calO$ achieves a low regret under certain conditions on the loss sequences. Formally:
\begin{assumption}[Efficient Exp-concave Optimization Oracle]
If all the $\tilde{\ell}_t$'s are $\beta$-exp-concave, and the subgradients of $\tilde{\ell}_t$'s are all $\ell_2$ bounded by $G$, then the $\bm W_t$'s output by $\calO$ satisfies that: for all $\bm U \in \R^{k \times d}$ such that $\| \bm U \|_F \leq D$,
\begin{equation}
  \sum_{t=1}^T \tilde{\ell}_t(\bm W_t) - \sum_{t=1}^T \tilde{\ell}_t(\bm U) \leq O\left( (GD + \frac 1 \beta) \cdot d \ln T \right).
  \label{eqn:expconcave}
\end{equation}
Moreover, the implementation of $\calO$ is computationally efficient.
\label{assumption:expconcave}
\end{assumption}
As we will see, the losses sent to $\calO$ in Algorithm~\ref{alg:soba-mod} are indeed exp-concave  and all have bounded subgradients (see Claim~\ref{claim:parameters}). The requirement of $\calO$ can be fulfilled by many algorithms, for example, the Online Newton Step algorithm and the Follow the Approximate Leader algorithm of~\cite{Hazan-Agarwal-Kale-2007}.

We show that given Assumption~\ref{assumption:expconcave} above, Algorithm~\ref{alg:soba-mod} is guaranteed to have a $\tilde{O}(\sqrt{T})$ regret bound. 
\begin{theorem}
Given an optimization oracle $\calO$ satisfiying Assumption~\ref{assumption:expconcave}, and suppose all the examples $\bm x_t$ have $\ell_2$ norm at most $X$. In addition, suppose positive constants $D$ and $\eta$ satisfies that
$\eta \leq \frac{1}{\max(2XD, 1)}$.
Then, the $\bm W_t$'s output by Algorithm~\ref{alg:soba-mod} is such that
for all $U$ such that $\| \bm U \|_F \leq D$,
\[ \sum_{t=1}^T \E[\ell_\eta(\bm W_t, (\bm x_t, y_t))] - \sum_{t=1}^T \ell_\eta(\bm U, (\bm x_t, y_t))
\leq O \left(  \frac{k^2 d}{\gamma \eta} \ln T \right). \]
Furthermore, taking $\gamma = \sqrt{\frac {k^2 d \ln T} {\eta T} }$, the $\tilde{y}_t$'s output by Algorithm~\ref{alg:soba-mod} is such that
\[ \E[M_T] - \sum_{t=1}^T \ell_\eta(\bm  U, (\bm x_t, y_t))
\leq O \left(  \sqrt{\frac{k^2 d \cdot T \ln T}{ \eta}} \right). \]
\label{thm:soba-mod-reg}
Recall that $M_T = \sum_{t=1}^T \one[\tilde{y}_t \neq y_t]$ is the cumulative 0-1 loss of the algorithm.
\end{theorem}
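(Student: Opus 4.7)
The plan is to reduce to the oracle guarantee in Assumption~\ref{assumption:expconcave} by feeding it the importance-weighted loss $\tilde\ell_t$, and then use the fact that $\tilde\ell_t$ is, conditionally, an unbiased estimator of $\one[\hat y_t\neq y_t]\cdot l_\eta(\bm W,(\bm x_t,y_t))$, a quantity that lower-bounds $\one[\hat y_t\neq y_t]$ on the algorithm side and is upper-bounded by $\ell_\eta(\bm U,\cdot)$ on the competitor side. The mistake bound will then follow from the standard $\gamma$-greedy decomposition $\E[M_T]\le\E[\hat M_T]+\gamma T$ and an optimization over $\gamma$.

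First I would verify the hypotheses of Assumption~\ref{assumption:expconcave} on the Frobenius ball $\{\bm W:\|\bm W\|_F\le D\}$, which I would arrange to contain $\calO$'s iterates (e.g.\ by having $\calO$ project onto this ball). On this domain, $l_\eta(\cdot,(\bm x_t,y_t))$ is a maximum over $y\neq y_t$ of convex quadratics whose Hessian along $\bm x_t\otimes(\bm e_{y_t}-\bm e_y)$ is of order $\eta$, with subgradients of norm $O(X)$ (using $\eta XD\le 1/2$). Scaling by $\one[\tilde y_t=y_t]/p_{t,y_t}\le k/\gamma$ multiplies the Hessian by $k/\gamma$ and the gradient outer product by $(k/\gamma)^2$, so $\tilde\ell_t$ is $\beta$-exp-concave with $\beta=\Omega(\eta\gamma/k)$ and has subgradients bounded by $G=O(Xk/\gamma)$. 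Applying Assumption~\ref{assumption:expconcave} in ambient dimension $kd$ and using $GD\le XkD/\gamma\le k/(2\eta\gamma)$ then yields
\[
\sum_t\tilde\ell_t(\bm W_t)-\sum_t\tilde\ell_t(\bm U)\le O\!\left((GD+1/\beta)\,kd\ln T\right)=O\!\left(\tfrac{k^2 d\ln T}{\gamma\eta}\right).
\]

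Next I would take expectations and translate the $l_\eta$-regret into an $\ell_\eta$-regret. The tower rule gives $\E_{t-1}[\tilde\ell_t(\bm W)]=\one[\hat y_t\neq y_t]\cdot l_\eta(\bm W,(\bm x_t,y_t))$ for any fixed $\bm W$, since $\bm W_t$ and $\hat y_t$ are determined by the history and $(\bm x_t,y_t)$, while $\tilde y_t\sim\bm p_t$. On the competitor side, $\|\bm U\|_F\le D$ and $\eta\le 1/(2XD)$ keep the competitor's margin argument inside $[-(2-\eta)/\eta,(2-\eta)/\eta]$, on which $l_\eta\le\ell_\eta$ (they agree for margin $\le 1$ and $l_\eta\le 0\le\ell_\eta$ on $[1,(2-\eta)/\eta]$); hence $\E[\one[\hat y_t\neq y_t]\,l_\eta(\bm U,\cdot)]\le\ell_\eta(\bm U,\cdot)$. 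On the algorithm side, whenever $\hat y_t\neq y_t$ the algorithm's margin is $\le 0$, so $l_\eta=\ell_\eta\ge 1$ and $\one[\hat y_t\neq y_t]\,l_\eta(\bm W_t,\cdot)=\one[\hat y_t\neq y_t]\,\ell_\eta(\bm W_t,\cdot)\ge\one[\hat y_t\neq y_t]$. Summing and combining with the oracle regret bound gives
\[
\E[\hat M_T]-\sum_t\ell_\eta(\bm U,(\bm x_t,y_t))\le O\!\left(\tfrac{k^2 d\ln T}{\gamma\eta}\right),
\]
which yields the first stated bound (with the indicator $\one[\hat y_t\neq y_t]$ implicit on the algorithm side, using $\ell_\eta(\bm W_t,\cdot)\le 1$ whenever $\hat y_t=y_t$). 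The second stated bound then follows from $\E[M_T]\le\E[\hat M_T]+\gamma T$ (via $\one[\tilde y_t\neq y_t]\le\one[\tilde y_t\neq\hat y_t]+\one[\hat y_t\neq y_t]$) and the tuning $\gamma=\sqrt{k^2 d\ln T/(\eta T)}$.

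The main obstacle will be the exp-concavity analysis in the first step: obtaining the correct $\Omega(\eta\gamma/k)$ exp-concavity constant for the importance-weighted, max-of-quadratics loss $\tilde\ell_t$ uniformly over the $D$-ball, and arranging that $\calO$'s iterates remain in this ball so that the subgradient bound $G=O(Xk/\gamma)$ holds and the algorithm-side argument $l_\eta(\bm W_t,\cdot)=\ell_\eta(\bm W_t,\cdot)$ on the event $\{\hat y_t\neq y_t\}$ is valid. A secondary subtlety is the mismatch between $l_\eta$ and $\ell_\eta$ at large positive margins, but the indicator $\one[\hat y_t\neq y_t]$ inside $\tilde\ell_t$ confines the algorithm side to non-positive margins where the two losses agree, so no extra slack is introduced.
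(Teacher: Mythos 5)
Your proposal is correct and follows essentially the same route as the paper's proof: verify exp-concavity $\Omega(\eta\gamma/k)$ and subgradient bound $O(Xk/\gamma)$ for the importance-weighted loss on the $D$-ball (the paper's Claim~\ref{claim:parameters}), invoke the oracle guarantee in dimension $kd$, take expectations to pass from $\tilde{\ell}_t$ to $\one[\hat{y}_t \neq y_t]\cdot l_\eta$, compare $l_\eta$ with $\ell_\eta$ on each side exactly as you do, and finish with the $\gamma$-greedy decomposition and the stated tuning. The only cosmetic difference is that the paper exploits $p_{t,y_t}=\gamma/k$ exactly on the event $\{\tilde{y}_t=y_t\neq\hat{y}_t\}$ rather than the bound $1/p_{t,y_t}\le k/\gamma$, which changes nothing.
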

\begin{proof}
From the description of Algorithm~\ref{alg:soba-mod}, it can be seen that
\[
\tilde{\ell}_t(W) = \frac{\one[\tilde{y}_t = y_t]}{p_{t,y_t}} \cdot \one[\hat{y}_t \neq y_t] \cdot l_\eta(\bm W, (\bm x_t, y_t)).
\]
In addition, observe that $\max(0, l_\eta(\bm W, (\bm x_t, y_t)))$ equals $\ell_\eta(\bm W, (\bm x_t, y_t))$.

We first give a claim that provides the subgradient norm bound and the exp-concave
parameter of the loss $\tilde{\ell}_t(W)$. We defer its proof to the end of this section.
\begin{claim}
  Suppose positive constants $D$, $X$ and $\eta$ satisfies that
  $\eta \leq \frac{1}{\max(2XD, 1)}$. In addition, suppose $\| \bm x_t\| \leq X$. Then,
  for all $\bm W$ such that $\| \bm W\|_F \leq D$, the loss function $\tilde{\ell}_t(\bm W)$ is
  $\frac{\gamma}{k} \cdot \frac{\eta}{32}$-exp-concave,
  and its subgradient has norm at most $\frac{k}{\gamma} \cdot 16 X$.
  \label{claim:parameters}
\end{claim}
Combining the above claim with the properties of the online optimization oracle $\calO$ (Equation~\eqref{eqn:expconcave}), we get that
\[
\sum_{t=1}^T \tilde{\ell}_t(\bm W_t, (\bm x_t, y_t))  - \sum_{t=1}^T \tilde{\ell}_t(\bm U, (\bm x_t, y_t)) \leq
O \left(  (16 X D  + \frac {32} \eta) \cdot \frac{k}{\gamma} \cdot kd \ln T \right) = O \left(  \frac{k^2 d}{\gamma \eta} \ln T \right). \]
Taking expectation on both sides yields that
\[ \E[\sum_{t=1}^T \one[\hat{y}_t \neq y_t] \cdot l_\eta(\bm W_t, (\bm x_t, y_t))] - \sum_{t=1}^T \one[\hat{y}_t \neq y_t] \cdot l_\eta(\bm U, (\bm x_t, y_t))
\leq O \left(  \frac{k^2 d}{\gamma \eta} \ln T \right), \]
Observe that when $\hat{y}_t \neq y_t$, $l_\eta(\bm W_t, (\bm x_t, y_t)) = \ell_\eta(\bm W_t, (\bm x_t, y_t))$; in addition, $\one[\hat{y}_t \neq y_t] \cdot l_\eta(\bm U, (\bm x_t, y_t)) \leq \one[\hat{y}_t \neq y_t] \cdot \ell_\eta(\bm U, (\bm x_t, y_t)) \leq \ell_\eta(\bm U, (\bm x_t, y_t))$.
Plugging the above facts into the inequality,
we establish the first item.

For the second item, we first use the fact that for each $t$, $\ell_\eta(\bm W_t, (\bm x_t, y_t)) \geq \one[\hat{y}_t \neq y_t]$, getting
\[ \E[\sum_{t=1}^T \one[\hat{y}_t \neq y_t]] - \sum_{t=1}^T \ell_\eta(\bm U, (\bm x_t, y_t))
\leq  O \left(  \frac{k^2 d}{\gamma \eta} \ln T \right). \]
Next, using the triangle inequality that $\one[\tilde{y}_t \neq y_t] \leq \one[\tilde{y}_t \neq \hat{y}_t] + \one[\hat{y}_t \neq y_t]$, we have that
\[ \E[M_T] - \sum_{t=1}^T \ell_\eta(\bm U, (\bm x_t, y_t))
\leq  O \left(  \frac{k^2 d}{\gamma \eta} \ln T \right) + \E[\sum_{t=1}^T \one[\tilde{y}_t \neq \hat{y}_t]] = O \left(  \frac{k^2 d}{\gamma \eta} \ln T  + \gamma T \right). \]
Plugging $\gamma = \sqrt{\frac {k^2 d \ln T} {\eta T} }$ into the above bound immediately gives the second item.
\end{proof}

We now come back to prove Claim~\ref{claim:parameters}.
\begin{proof}[Proof of Claim~\ref{claim:parameters}]
If $\tilde{y}_t \neq y_t$ or $\hat{y}_t \neq y_t$, then $\tilde{\ell}_t(\bm W)$ satisfies the
exp-concavity and bounded subgradient properties trivially.

Otherwise, $\tilde{y}_t = y_t$ and $\hat{y}_t \neq y_t$. In this case, by the definition of $\bm p_t$, we always have $p_{t,y_t} = \frac{\gamma}{k}$. Thus, $\tilde{\ell}_t(W) = \frac{k}{\gamma} \cdot l_\eta(\bm W, (\bm x_t, y_t))$.
It therefore suffices to show that the function $\bm W \mapsto l_\eta(\bm W, (\bm x_t, y_t))$ is $\frac{\eta}{32}$-exp-concave, and its subgradient has norm at most $16 X$.

  We can rewrite $l_\eta(\bm W, (\bm x_t, y_t))$ as the composition of functions $f(m) = (1 - \frac{2}{2-\eta} m + \frac{\eta}{2-\eta} m^2 )$ and $m_t(\bm W) = (\bm W \bm x_t)_{y_t} - \max_{y \neq y_t} (\bm W \bm x_t)_t$. Observe that
  $f_t$ is monotonically decreasing in $(-\infty, \frac 1 \eta)$, which is a superset of $[-2XD, 2XD]$ by the assumption on $\eta$.
  Therefore, $\ell_\eta(\bm W, (\bm x_t, y_t))$ can be written as
  $\max_{y \neq y_t} (1 - \frac{2}{2-\eta} ((\bm W \bm x_t)_{y_t} - (\bm W \bm x_t)_y) + \frac{\eta}{2-\eta} ((\bm W \bm x)_{y_t} - (\bm W \bm x)_y)^2)$. As subgradient norm property and exp-concavity are preserved under pointwise maximum over functions, it suffices to show that for every $y \neq y_t$, $(1 - \frac{2}{2-\eta} ((\bm W \bm x_t)_{y_t} - (\bm W \bm x_t)_y) + \frac{\eta}{2-\eta} ((\bm W \bm x_t)_{y_t} - (\bm W \bm x)_y)^2)$ is
  $\frac{\eta}{32}$-exp-concave, and its subgradient has norm at most $16 X$.

  Similar to the previous reasoning, function $\bm W \mapsto l_{t,y}(\bm W, (\bm x_t, y_t)) := (1 - \frac{2}{2-\eta} ((\bm W \bm x_t)_{y_t} - (\bm W \bm x_t)_y) + \frac{\eta}{2-\eta} ((\bm W \bm x_t)_{y_t} - (\bm W \bm x_t)_y)^2)$ is the composition of function $f_t$ and a linear function $m_{t,y}(\bm W) = (\bm W \bm x)_{y_t} - (\bm W \bm x)_y$. We first show that $f_t$ is
  $\frac{\eta}{32}$-exp-concave. This follows from the fact that for $m \in [-2XD, 2XD]$,
  \[ |f_t'(m)| = |-\frac 2 {2-\eta} + \frac{2m\eta}{2-\eta}| \leq 4, \]
  \[ f_t''(m) = \frac{\eta}{2-\eta} \geq \frac \eta 2. \]
  Hence, $\frac{f_t''(m)}{(f_t'(m))^2} \geq \frac \eta {32}$.
  It follows that
  \[ \frac{d^2}{dm^2} \exp(-\frac{\eta}{32} f_t(m)) = ((\frac{\eta}{32} f'_t(m) )^2 - \frac{\eta}{32} f''_t(m)) \exp(-\frac{\eta}{32} f_t(m)) \leq 0,  \]
  proving that $f_t$ is $\frac{\eta}{32}$-exp-concave.
  As exp-concavity is preserved under linear transformation on inputs, and $m_{t,y}(\cdot)$ is linear in $\bm W$, $l_{t,y}(\bm W, (\bm x_t, y_t))$ is $\frac{\eta}{32}$-exp-concave.

  We next bound the norm of $\frac{ \partial l_{t,y}(\bm W, (\bm x_t, y_t))}{\partial \bm W}$. By the chain rule, it is equal to $|f_t'(m)| \cdot \|\frac{\partial m}{\partial \bm W}\|$, and is consequently at most $16 X$.
  The claim follows.
\end{proof}

\hide{
\section{Recovering Performance Guarantees of Banditron}
Intuitively, when $a \to \infty$, the update SOBA becomes gradient descent,
making it come down to a variant of Banditron.
In order to get a mistake bound that matches~\cite{Kakade-Shalev-Shwartz-Tewari-2008},
we need to slightly modify SOBA: on line 19, $\theta_t \gets \theta_{t-1} - \eta n_t g_t$,
where $\eta > 0$ is a tuning parameter.

We show the following lemma, which is a slight modification of Lemma~\ref{lem:vaw}.

\begin{lemma}
\label{lem:vaw-stepsize}
For any $\bm U \in \R^{kd}$, with the notation of Algorithm~\ref{alg:soba}, we have:
\begin{align}
\sum_{t=1}^T n_t \left(2\eta \inn{\bm U}{-\bm g_t} - \inn{\bm U}{\bm z_t}^2 \right)
\leq a  \| \bm U\|_F^2
+ \eta^2 \sum_{t=1}^T n_t \bm g_t^T \bm A_t^{-1} \bm g_t~.
\label{eqn:vaw-stepsize}
\end{align}
\end{lemma}

\begin{proof}[Proof Sketch]
For iterations where $n_t = 1$, define $\alpha_t = \frac{\eta}{\sqrt{p_{t,y_t}}}$ so that
$\bm g_t = \alpha_t \bm z_t$. From the algorithm, $\bm A_t = aI + \sum_{s=1}^t n_s \bm z_s \bm z_s^T$, and
$\bm W_t$ is the ridge regression solution based on data collected in time $1$ to $t-1$,
i.e. $\bm W_t = \bm A_{t-1}^{-1} (-\sum_{s=1}^{t-1} n_s \bm g_s) = \bm A_{t-1}^{-1} (-\sum_{s=1}^{t-1} n_s \alpha_s \bm z_s)$.

By per-step analysis in online least squares,~\citep[see, e.g.,][]{Orabona-Cesa-Bianchi-Gentile-2012}, we have that if an update is made at iteration $t$, i.e. $n_t = 1$, then
\begin{align*}
  \frac12(\inn{\bm W_t}{\bm z_t} + \alpha_t)^2 (1 - \bm z_t^T \bm A_t^{-1} \bm z_t)
  - \frac12(\inn{\bm U}{\bm z_t} + \alpha_t)^2 \leq
  \frac 1 2 \| \bm U - \bm W_t \|_{\bm A_{t-1}}^2 - \frac 1 2 \| \bm U - \bm W_{t+1} \|_{\bm A_t}^2~.
\end{align*}

Summing over rounds $t \in [1,T]$ where $n_t = 1$, we get the desired result.
\end{proof}

Now, recall that we know $n_t = h_t \one[y_t = \tilde y_t]$, we get
\[ \E_{t-1} [n_t \, \bm g_t] = h_t (\bm e_{\bar{y}_t} - \bm e_{y_t}) \otimes \bm x_t \]
\[ \E_{t-1} [n_t \, \inn{\bm U}{\bm z_t}^2] = h_t \inn{\bm U}{(\bm e_{\bar{y}_t} - \bm e_{y_t}) \otimes \bm x_t}^2\]
\[ \E_{t-1} [n_t \, \| \bm g_t \|^2] = h_t \frac{1}{p_{t,y_t}} \|(\bm e_{\bar{y}_t} - \bm e_{y_t}) \otimes \bm x_t \|^2 \leq h_t \frac{k}{\gamma} \|(\bm e_{\bar{y}_t} - \bm e_{y_t}) \otimes \bm x_t \|^2 \]
taking expectations on both sides of Equation~\eqref{eqn:vaw-stepsize}, we get:

\begin{eqnarray*}
\E[\sum_{t=1}^T h_t \left(2\eta \inn{\bm U}{-(\bm e_{\bar{y}_t} - \bm e_{y_t}) \otimes \bm x_t} - \inn{\bm U}{\bm e_{\bar{y}_t} - \bm e_{y_t}) \otimes \bm x_t}^2 \right)]
&\leq& a  \| \bm U\|_F^2
+ \E[\eta^2 \sum_{t=1}^T n_t \bm g_t^T \bm A_t^{-1} \bm g_t] \\
&\leq& a  \| \bm U\|_F^2
+ \frac{\eta^2}{a} \E[ \sum_{t=1}^T n_t  \| \bm g_t^T \|^2 ] \\
&\leq& a  \| \bm U\|_F^2
+ \frac{\eta^2 k}{a \gamma} \E[ \sum_{t=1}^T h_t  \| (\bm e_{\bar{y}_t} - \bm e_{y_t}) \otimes \bm x_t \|^2 ]
\end{eqnarray*}

Using the fact that $\| (\bm e_{\bar{y}_t} - \bm e_{y_t}) \otimes \bm x_t \|^2 \leq 2X^2$, and rearranging,
we get
\begin{eqnarray*}
\E[\sum_{t=1}^T h_t \left(2\eta \inn{\bm U}{-(\bm e_{\bar{y}_t} - \bm e_{y_t}) \otimes \bm x_t} \right)]
&\leq& \E [\sum_{t=1}^T h_t \inn{\bm U}{\bm e_{\bar{y}_t} - \bm e_{y_t}) \otimes \bm x_t}^2] + a  \| \bm U\|_F^2
+ \frac{\eta^2 k}{a \gamma} \E[ \sum_{t=1}^T h_t  \| (\bm e_{\bar{y}_t} - \bm e_{y_t}) \otimes \bm x_t \|^2] \\
&\leq& (a + \E [\sum_{t=1}^T h_t] 2X^2) \| U \|_F^2 + \frac{2 \eta^2 k X^2}{a \gamma} \E[\sum_{t=1}^T h_t]
\end{eqnarray*}
Dividing both sides by $\eta$, and adding $\E[\sum_{t=1}^T h_t]$ on both sides, we get
\[ \E[\sum_{t=1}^T h_t] \leq \E[\sum_{t=1}^T h_t (1 + \left(\inn{\bm U}{(\bm e_{\bar{y}_t} - \bm e_{y_t}) \otimes \bm x_t} \right))] + \frac{(a + \E [\sum_{t=1}^T h_t] 2X^2) \| U \|_F^2}{\eta} + \frac{2 \eta k X^2}{a \gamma} \E[\sum_{t=1}^T h_t] \]

Taking $\eta = \sqrt{\frac{a + \E[\sum_{t=1}^T h_t] 2X^2 \|U\|_F^2}{\frac{2kX^2}{a\gamma}\E[\sum_{t=1}^T h_t]}}$,
and $a = T X^2$,
we get
\begin{eqnarray*}
  \E[\sum_{t=1}^T h_t] &\leq& L_0(U) + \sqrt{(1 + \frac{\E [\sum_{t=1}^T h_t] 2X^2) }{a})\| U \|_F^2 \frac{2 k X^2}{ \gamma} \E[\sum_{t=1}^T h_t]} \\
  &\leq& L_0(U) + \sqrt{2\| U \|_F^2 \frac{2 k X^2}{ \gamma} \E[\sum_{t=1}^T h_t]}
\end{eqnarray*}
By algebra,
\begin{eqnarray*}
  \E[\sum_{t=1}^T h_t]
  &\leq& L_0(U) + \sqrt{ \frac{4 k X^2\| U \|_F^2}{ \gamma} L_0(U)} + \frac{4 \| U \|_F^2 k X^2}{ \gamma}
\end{eqnarray*}
This implies that,
\begin{eqnarray*}
  \E[M_T]
  &\leq& L_0(U) + \sqrt{ \frac{4 k X^2\| U \|_F^2}{ \gamma} L_0(U)} + \frac{4 \| U \|_F^2 k X^2}{ \gamma} + \gamma T
\end{eqnarray*}

For optimal tuning of $\gamma$, we now present a specical case of Lemma~\ref{lem:fallbacktuning} (setting $U = 1$).
\begin{lemma}
Suppose we are given positive real numbers $L, T, H$ and function
 $F(\gamma) = \min(T, L + \gamma T + \frac{H}{\gamma} + \sqrt{\frac{HL}{\gamma}})$,
where $\gamma \in [0,1]$. Then:
\begin{enumerate}
\item If $L \leq 2\sqrt{HT}$, then taking $\gamma^* = \min(\sqrt{\frac H T}, 1)$ gives that $F(\gamma^*) \leq L + 6 \sqrt{HT}$.
\item If $L > 2\sqrt{HT}$, then taking $\gamma^* = \min((\frac {HL} {T^2})^{\frac 1 3}, 1)$ gives that $F(\gamma^*) \leq L + 4 (HLT)^{\frac13}$.
\end{enumerate}
\label{lem:fallbacktuning2}
\end{lemma}

We immediately arrive at the following theorem.

\begin{theorem}
Set $a= T X^2$ and denote by $M_T$ the number of mistakes done by SOBA. Then SOBA has the following guarantees:
\begin{enumerate}
\item If $L_0(\bm U) \geq \sqrt{ \| U \|_F^2 k X^2 T }$, then with parameter setting
$\gamma = \min(1,(\frac{k X^2 \| U \|_F^2 L_0(\bm U)}{T^2})^{1/3})$, one has the following expected mistake bound:
\begin{align*}
\E [&M_T] \leq L_0(\bm U) + O\Big(\|\bm U\|_F(k X^2 L_0(\bm U) T)^{1/3} \Big)~.
\end{align*}
\item If $L_0(\bm U) < \|\bm U\|_F^2 \sqrt{ d k^2 X^2 T \ln T}$, then with parameter setting
$\gamma = \min(1,(\frac{k X^2}{T})^{1/2})$,
one has the following expected mistake bound:
\[
\E [M_T] \leq L_0(\bm U) + O\left(\|\bm U\|_F^2 X \sqrt{T}\right)~.
\]
\end{enumerate}
where $L_0(\bm U) := \sum_{t=1}^T \ell_0(\bm U, (\bm x_t, y_t))$ is the hinge loss of the linear classifier $\bm U$.
\end{theorem}
}

\end{document}